\setlist[itemize]{leftmargin=*}
\setlist[enumerate]{leftmargin=*}
\begin{document}
\title{Learning Low-Dimensional Metrics}
\author{Lalit Jain\footnote{Authors contributed equally to this paper and are listed alphabetically.}, University of Michigan, \\
 Blake Mason\footnotemark[1], University of Wisconsin -- Madison,\\
 Robert Nowak, University of Wisconsin -- Madison}
\date{September, 2017}
\maketitle
%

%


\newcommand{\rob}[1]{{\textcolor{red}{Rob:#1}}}

 

\def \E{\mathbb E}
\def \P{\mathbb P}
\def \1{\mathbf{1}}
\def \0{\mathbf{0}}
\def \R{\mathbb R}
\def \bx{\boldsymbol{x}}
\def \bu{\boldsymbol{u}}
\def \by{\boldsymbol{y}}
\def \bs{\boldsymbol{s}}
\def \bv{\boldsymbol{v}}
\def \bz{\boldsymbol{z}}
\def \be{\boldsymbol{e}}
\def \bX{\boldsymbol{X}}
\def \bA{\boldsymbol{A}}
\def \bT{\boldsymbol{T}}
\def \bD{\boldsymbol{D}}
\def \bB{\boldsymbol{B}}
\def \bE{\boldsymbol{E}}
\def \bM{\boldsymbol{M}}
\def \bMt{\boldsymbol{M}^t}
\def \bKi{\boldsymbol{K}_{i\cdot}}
\def \bMit{\boldsymbol{M}_{i\cdot}^t}
\def \bG{\boldsymbol{G}}
\def \bC{\boldsymbol{C}}
\def \bE{\boldsymbol{E}}
\def \bK{\boldsymbol{K}}
\def \bV{\boldsymbol{V}}
\def \bH{\boldsymbol{H}}
\def \bU{\boldsymbol{U}}
\def \bS{\boldsymbol{S}}
\def \bbS{\mathbb{S}}

\def \bI{\boldsymbol{I}}
\def \bJ{\boldsymbol{J}}
\def \bL{\boldsymbol{L}}
\def \bL{\boldsymbol{L}}
\def \bZ{\boldsymbol{Z}}
\def \b1{\boldsymbol{1}}
\def \T{{\cal T}}
\def \K{{\cal K}}
\def \G{{\cal G}}
\def \H{{\cal H}}
\def \S{{\cal S}}
\def \M{\mathcal{M}}
\def \L{\mathcal{L}}
\def \vec{\text{vec}}
\def\imwidth{\textwidth}
\def \dS{\mathds{S}}
\def \hK{\widehat \bK}
\newcommand{\hG}{\widehat \bG}

\newcommand{\lalit}[1]{{\textcolor{red}{Lalit:#1}}}
\newtheorem{theorem}{Theorem}[section]
\newtheorem{corollary}{Corollary}[theorem]
\newtheorem{lemma}[theorem]{Lemma}

\newcommand{\maxx}{}
\newcommand{\mS}{\mathbb{S}}
\newcommand{\rank}{\mbox{rank}}
\newcommand{\diag}{\mbox{diag}}
\newcommand*{\red}{\textcolor{red}}
\newcommand{\pg}{\langle \bL_t, \bG \rangle}
\newcommand{\pt}{\langle \bL_t, \bG^\star \rangle}
\newcommand{\nuc}[1]{\|#1\|_\ast}







\begin{abstract}
This paper investigates the theoretical foundations of metric learning, focused on three key questions that are not fully addressed in prior work: 1) we consider learning general low-dimensional (low-rank) metrics as well as sparse metrics; 2) we develop upper and lower (minimax) bounds on the generalization error; 3) we quantify the sample complexity of metric learning in terms of the dimension of the feature space and the dimension/rank of the underlying metric; 4)  we also bound the accuracy of the learned metric relative to the underlying 
true generative metric.  All the results involve novel mathematical approaches to the metric learning problem, and also shed new light on the special case of ordinal embedding (aka non-metric multidimensional scaling).  
\end{abstract}

\section{Low-Dimensional Metric Learning}

This paper studies the problem of learning a low-dimensional Euclidean metric from comparative judgments.  Specifically, consider a set of $n$ items with high-dimensional features $\bx_i\in\R^p$ and suppose we are given a set of (possibly noisy) distance comparisons of the form $$\mbox{sign}(\mbox{dist}(\bx_i,\bx_j)-\mbox{dist}(\bx_i,\bx_k)),$$
for a subset of all possible triplets of the items. Here we have in mind comparative judgments made by humans and the distance function implicitly defined according to human perceptions of similarities and differences.  For example, the items could be images and the $\bx_i$ could be visual features automatically extracted by a machine. Accordingly, our goal is to learn a $p\times p$ symmetric positive semi-definite (psd) matrix $\bK$ such that the metric $d_{\bK} (\bx_i,\bx_j):=(\bx_i-\bx_j)^T\bK (\bx_i-\bx_j)$, where $d_{\bK} (\bx_i,\bx_j)$ denotes the squared distance between items $i$ and $j$ with respect to a matrix $\bK$, predicts the given distance comparisons as well as possible. Furthermore, it is often desired that the metric is {\em low-dimensional} relative to the original high-dimensional feature representation (i.e., $\mbox{rank}(\bK) \leq d < p$). There are several motivations for this:
\begin{enumerate}
\item[$\bullet$] Learning a high-dimensional metric may be infeasible from a limited number of comparative judgments, and encouraging a low-dimensional solution is a natural  regularization.
\item[$\bullet$] Cognitive scientists are often interested in visualizing human perceptual judgments (e.g., in a two-dimensional representation) and determining which features most strongly influence human perceptions. For example, educational psychologists in \cite{rau2016model} collected comparisons between visual representations of chemical molecules in order to identify a small set of visual features that most significantly influence the judgments of beginning chemistry students.
\item[$\bullet$] It is sometimes reasonable to hypothesize that a small subset of the high-dimensional features dominate the underlying metric (i.e., many irrelevant features).
\item[$\bullet$] Downstream applications of the learned metric (e.g., for classification purposes) may benefit from robust, low-dimensional metrics.
\end{enumerate}
With this in mind, several authors have proposed nuclear norm and $\ell_{1,2}$ group lasso norm regularization to encourage low-dimensional and sparse metrics as in Fig. ~\ref{fig:kernel_examples_b} (see \cite{bellet2015metric} for a review).  
Relative to such prior work, the contributions of this paper are three-fold:

\begin{figure}[]
\centering
\begin{subfigure}{.25\textwidth}
  \centering
  \includegraphics[width=1.0\textwidth]{./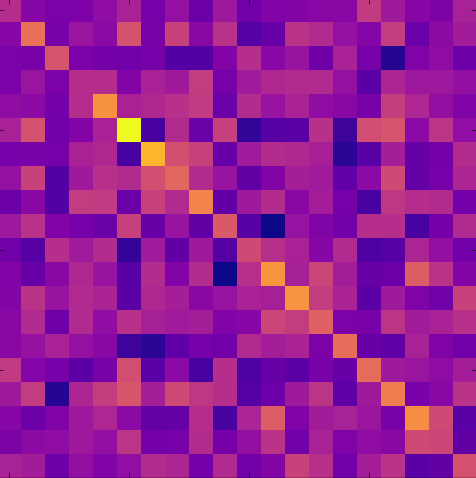}
    \caption{A general low rank psd matrix}
  \label{fig:kernel_examples_a}
\end{subfigure}%
 \hspace{2cm}
\begin{subfigure}{.25\textwidth}
  \centering
  \includegraphics[width=1.0\textwidth]{./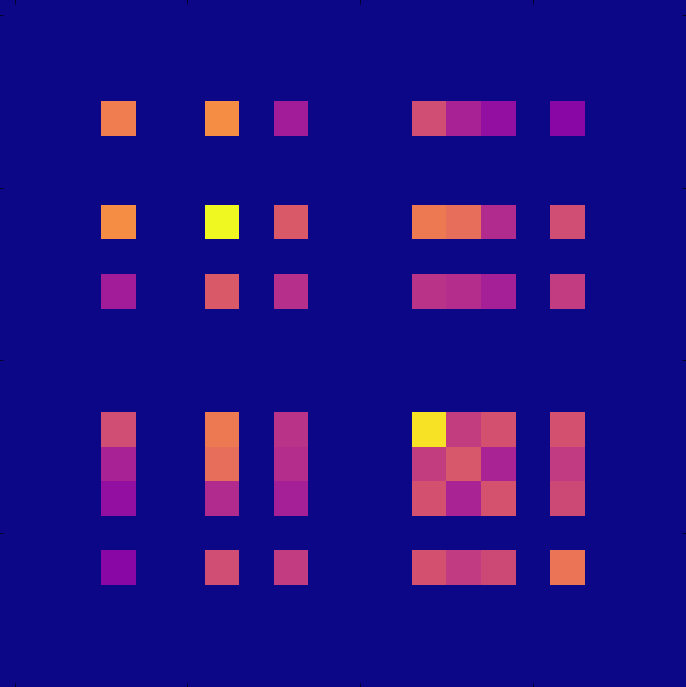}
    \caption{A sparse and low rank psd matrix}
  \label{fig:kernel_examples_b}
\end{subfigure}
\caption{Examples of $\bK$  for $p=20$ and $d=7$. The sparse case depicts a situation in which only some of the features are relevant to the metric.}
\label{fig:kernel_examples}
\end{figure}

\begin{enumerate}
\item We develop novel upper bounds on the generalization error and sample complexity of learning low-dimensional metrics from triplet distance comparisons. Notably, unlike previous generalization bounds, our bounds allow one to easily quantify how the feature space dimension $p$ and rank or sparsity $d<p$ of the underlying metric impacts the sample complexity.
\item We establish minimax lower bounds for learning low-rank and sparse metrics that match the upper bounds up to polylogarithmic factors, demonstrating the optimality of learning algorithms for the first time.  Moreover, the upper and lower bounds demonstrate that learning sparse (and low-rank) metrics is essentially as difficult as learning a general low-rank metric. This suggests that nuclear norm regularization may be preferable in practice, since it places less restrictive assumptions on the problem.
\item We use the generalization error bounds to obtain model identification error bounds that quantify the accuracy of the learned $\bK$ matrix.   This problem has received very little, if any, attention in the past and is crucial for interpreting the learned metrics (e.g., in cognitive science applications). This is a bit surprising, since the term ``metric learning'' strongly suggests accurately determining a metric, not simply learning a predictor that is parameterized by a metric.
\end{enumerate}

\subsection{Comparison with Previous Work}
There is a fairly large body of work on metric learning which is nicely reviewed and summarized in the monograph \cite{bellet2015metric}, and we refer the reader to it for a comprehensive summary of the field. Here we discuss a few recent works most closely connected to this paper. Several authors have developed generalization error bounds for metric learning, as well as bounds for downstream applications, such as classification, based on learned metrics. To use the terminology of \cite{bellet2015metric}, most of the focus has been on must-link/cannot-link constraints and less on relative constraints (i.e., triplet constraints as considered in this paper).  Generalization bounds based on algorithmic robustness are studied in \cite{bellet2015robustness}, but the generality of this framework makes it difficult to quantify the sample complexity of specific cases, such as low-rank or sparse metric learning.  Rademacher complexities are used to establish generalization error bounds in the must-link/cannot-link situation in \cite{guo2014guaranteed,ying2009sparse,bian2012constrained}, but do not consider the case of relative/triplet constraints.  The sparse compositional metric learning framework of \cite{shi2014sparse} does focus on relative/triplet constraints and provides generalization error bounds in terms of covering numbers.  However, this work does not provide bounds on the covering numbers, making it difficult to quantify the sample complexity.  To sum up, prior work does not quantify the sample complexity of metric learning based on relative/triplet constraints in terms of the intrinsic problem dimensions (i.e., dimension $p$ of the high-dimensional feature space and the dimension $d$ of the underlying metric), there is no prior work on lower bounds, and no prior work quantifying the accuracy of learned metrics themselves (i.e., only bounds on prediction errors, not model identification errors). Finally we mention that Fazel et a.l \cite{oymak2015simultaneously} also consider the recovery of sparse and low rank matrices from linear observations. Our situation is very different, our matrices are low rank because they are sparse - not sparse and simultaneously low rank as in their case. 

\section{The Metric Learning Problem}



Consider $n$ known points $\bX := [\bx_1,\bx_2,\dots,\bx_n] \in \R^{p\times n}$. We are interested in learning a symmetric positive semidefinite matrix $\bK$ that specifies a metric on $\R^p$ given ordinal constraints on distances between the known points. Let $\S$ denote a set of triplets, where each $t=(i,j,k)\in \S$ is drawn uniformly at random from the full set of $n{n-1 \choose 2} $ triplets $\T := \{(i,j,k): 1 \leq i \neq j \neq k \leq n, j < k\}$. For each triplet, we observe a $y_t\in \{\pm 1\}$ which is a noisy indication of the triplet constraint $d_{\bK}(x_i, x_j) <  d_{\bK}(x_i, x_k)$. Specifically we assume that each $t$ has an associated probability $q_t$ of $y_t = -1$, and all $y_t$ are statistically independent.

{\bf Objective 1:} Compute an estimate $\hK$ from $\S$ that predicts triplets as well as possible.    

In many instances, our triplet measurements are noisy observations of triplets from a true positive semi-definite matrix $\bK^\ast$. In particular we assume 
\[
q_t > 1/2 \ \Longleftrightarrow \ d_{\bK^\ast}(\bx_i,\bx_j) < d_{\bK^\ast}(\bx_i,\bx_k) \ .
\]
We can also assume an explicit known \textit{link function}, $f: \R\rightarrow [0,1]$, so that $q_t = f(d_{\bK^\ast}(\bx_i,\bx_j) - d_{\bK^\ast}(\bx_i,\bx_k))$.

{\bf Objective 2:} Assuming an explicit known link function $f$ estimate $\bK^\ast$ from $\S$.


\subsection{Definitions and Notation}
Our triplet observations are nonlinear transformations of a linear function of the Gram matrix $\bG := \bX^T\bK\bX$. Indeed for any triple $t=(i,j,k)$, define
\begin{eqnarray*}
\bM_t(\bK) & := & d_{\bK}(\bx_i,\bx_j) - d_{\bK}(\bx_i,\bx_k)\\
& = &
      \bx_i^T\bK\bx_k + \bx_k^T\bK\bx_i - \bx_i^T\bK\bx_j - \bx_j^T\bK\bx_i +\bx_j^T\bK\bx_j-\bx_k^T\bK\bx_k
      \ .
\end{eqnarray*}

So for every $t\in \S$, $y_t$ is a noisy measurement of $\text{sign}(\bM_t(\bK))$.
This linear operator may also be expressed as a matrix 
$$\bM_t := \bx_i\bx_k^T+ \bx_k\bx_i^T-\bx_i\bx_j^T - \bx_j\bx_i^T+\bx_j\bx_j^T-\bx_k\bx_k^T \ , $$
so that $\displaystyle \bM_t(\bK) = \langle \bM_t,\bK\rangle \ = \ \mbox{Trace}(\bM_t^T\bK)$.
We will use $\bM_t$ to denote the operator and associated matrix interchangeably.  Ordering the elements of $\T$ lexicographically, we let $\M$ denote the linear map,
\[\M(\bK) = (\bM_t(\bK)| \text{ for }t\in \T)\in \R^{n\binom{n-1}{2}} \]

Given a PSD matrix $\bK$ and a sample, $t\in \S$, we let
$\ell(y_t \langle \bM_t,\bK\rangle)$ denote the loss  of $\bK$ with respect to $t$; e.g., the 0-1 loss $\mathbbm{1}_{\{\mbox{sign}(y_t \langle \bM_t,\bK\rangle)\neq 1\}}$,
the hinge-loss $\max\{0,1-y_t \langle \bM_t,\bK\rangle\}$, or the
logistic loss $\log(1+\exp(-y_t \langle \bM_t,\bK\rangle))$. Note that we insist that our losses be functions of our triplet differences $\langle \bM_t, \bK\rangle$.  Further, note that this makes our losses invariant to rigid motions of the points $\bx_i$. Other models proposed for metric learning use scale-invariant loss functions \cite{heim2015active}. 

For a given loss $\ell$, we  then define the empirical risk with respect to our set of observations $\S$ to be
$$\widehat R_\S(\bK) \ := \ \frac{1}{|\S|}\sum_{t\in \S} \ell(y_t \langle
\bM_t,\bK\rangle). $$
This is an unbiased estimator of the true risk $R(\bK):=\E[\ell(y_t \langle
\bM_t,\bK\rangle)] $ where the expectation is taken with respect to a triplet $t$ selected uniformly at random and the random value of $y_t$.

Finally, we let $\bI_n$ denote the identity matrix in $\R^{n \times n}$, $\1_n$ the $n$-dimensional vector of all ones and $\bV := \bI_n -\frac{1}{n}\1_n\1_n^T$ the \textit{centering matrix}.  In particular if $\bX\in \R^{p\times n}$ is a set of points, $\bX\bV$ subtracts the mean of the columns of $\bX$ from each column. We say that $\bX$ is centered if $\bX\bV = 0$, or equivalently $\bX\1_n = 0$. If $\bG$ is the Gram matrix of the set of points $\bX$, i.e. $\bG = \bX^T\bX$, then we say that $\bG$ is centered if $\bX$ is centered or if equivalently, $\bG\1_n = 0$. Furthermore we use $\|\cdot\|_\ast $ to denote the nuclear norm, and $\|\cdot\|_{1,2} $ to denote the mixed $\ell_{1,2}$ norm of a matrix, the sum of the $\ell_2$ norms of its rows. Unless otherwise specified, we take $\|\cdot\|$ to be the standard operator norm when applied to matrices and the standard Euclidean norm when applied to vectors. Finally we define the $\bK$-norm of a vector as $\|\bx\|_{\bK}^2 := \bx^T\bK\bx$.  

\subsection{Sample Complexity of Learning Metrics.}
 In most applications, we are interested in learning a matrix $\bK$ that is low-rank and positive-semidefinite.  Furthermore as we will show in Theorem ~\ref{thm:upper-bound-nuclear}, such matrices can be learned using fewer samples than general psd matrices. As is common in machine learning applications, we relax the rank constraint to a nuclear norm constraint. In particular, let our constraint set be 
\[ \K_{\lambda,\gamma} = \{\bK\in \R^{p\times p}|\bK\text{ positive-semidefinite, }\nuc{\bK} \leq \lambda, \max_{t\in \T}\langle \bM_t,\bK\rangle\leq \gamma \}.\]

Up to constants, a bound on $\langle \bM_t , \bK \rangle$ is a bound on $\bx_i^T\bK\bx_i$. This bound along with assuming our loss function is Lipschitz, will lead to a tighter bound on the deviation of $\widehat{R}_{\S}(\bK)$ from $R(\bK)$ crucial in our upper bound theorem.

Let $\bK^\ast := \min_{\bK\in \K_{\lambda,\gamma}} R(\bK)$
be the true risk minimizer in this class, and let 
$\hK := \min_{\bK\in \K_{\lambda,\gamma}} \widehat R_\S(\bK)$ be the empirical risk minimizer. We achieve the following prediction error bounds for the empirical risk minimzer. 

\begin{theorem}
\label{thm:upper-bound-nuclear}
Fix $\lambda,\gamma, \delta > 0$. In addition assume that $\max_{1\leq i\leq n} \|\bx_i\|^2 = 1$. If the loss function $\ell$ is $L$-Lipschitz, then with probability at least $1-\delta$
\[R(\hK)-R(\bK^\ast) \leq 4L\left(\sqrt{\frac{140\lambda^2\frac{\|\bX\bX^T\|}{n}\maxx \log{p}}{|\S|}}+\frac{2\maxx \log{p}}{|\S|}\right)+\sqrt{\frac{2L^2\gamma^2\log2/\delta}{|\S|} }\]
\end{theorem}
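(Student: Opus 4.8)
The plan is the standard uniform-convergence argument for empirical risk minimization -- bound the excess risk by a uniform deviation, symmetrize, apply Talagrand's contraction lemma, pass to the operator-norm dual of a matrix Rademacher sum -- together with a matrix Bernstein bound whose variance parameter I would compute exactly; it is that computation that produces the factor $\|\bX\bX^T\|/n$. Since $\hK$ minimizes $\widehat R_\S$ and $\bK^\ast$ minimizes $R$ over the same set $\K_{\lambda,\gamma}$,
\[
R(\hK)-R(\bK^\ast)\ \le\ \big(R(\hK)-\widehat R_\S(\hK)\big)+\big(\widehat R_\S(\bK^\ast)-R(\bK^\ast)\big)\ \le\ 2\sup_{\bK\in\K_{\lambda,\gamma}}\big|R(\bK)-\widehat R_\S(\bK)\big|.
\]
For $\bK\in\K_{\lambda,\gamma}$ the constraint $\langle\bM_t,\bK\rangle\le\gamma$ forces $|\langle\bM_t,\bK\rangle|=O(\gamma)$ (write $\langle\bM_t,\bK\rangle=\|\bx_i-\bx_j\|_{\bK}^2-\|\bx_i-\bx_k\|_{\bK}^2$ and bound each $\bK$-norm by a constant times $\gamma$), so by $L$-Lipschitzness each loss varies by $O(L\gamma)$ and a single triplet moves $\widehat R_\S(\bK)$ by $O(L\gamma/|\S|)$; McDiarmid's inequality then replaces the supremum by its expectation plus a term which, after the factor $2$ above and tracking constants, is the $\sqrt{2L^2\gamma^2\log(2/\delta)/|\S|}$ summand.

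It remains to bound $\E\sup_{\bK\in\K_{\lambda,\gamma}}|R(\bK)-\widehat R_\S(\bK)|$. Symmetrization bounds it by $\frac{2}{|\S|}\E\sup_\bK\big|\sum_{t\in\S}\sigma_t\,\ell(y_t\langle\bM_t,\bK\rangle)\big|$ with Rademacher signs $\sigma_t$; the Ledoux-Talagrand contraction inequality removes $\ell$ at the cost of a factor $L$, and since $\sigma_t y_t$ has the same law as $\sigma_t$ this becomes $\frac{2L}{|\S|}\E\sup_\bK\big|\big\langle\sum_{t\in\S}\sigma_t\bM_t,\bK\big\rangle\big|$. Every $\bK\in\K_{\lambda,\gamma}$ is PSD with $\mathrm{Tr}\,\bK=\nuc{\bK}\le\lambda$, so $\sup_\bK\langle\bB,\bK\rangle\le\lambda\max\{\lambda_{\max}(\bB),0\}\le\lambda\|\bB\|$ for any symmetric $\bB$; hence $\E\sup_\bK|R(\bK)-\widehat R_\S(\bK)|$ is at most a constant multiple of $\frac{L\lambda}{|\S|}\E\big\|\sum_{t\in\S}\sigma_t\bM_t\big\|$.

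The crux is to bound $\E\big\|\sum_{t\in\S}\sigma_t\bM_t\big\|$ by the matrix Bernstein inequality applied to the i.i.d., mean-zero, symmetric $p\times p$ matrices $\sigma_t\bM_t$; this needs $R_0:=\max_{t\in\T}\|\bM_t\|$ and the variance proxy $v:=\frac{|\S|}{|\T|}\big\|\sum_{t\in\T}\bM_t^2\big\|$. Observing that $\bM_t=(\bx_i-\bx_j)(\bx_i-\bx_j)^T-(\bx_i-\bx_k)(\bx_i-\bx_k)^T$ and using $\max_i\|\bx_i\|^2=1$ gives $R_0\le 4$; the elementary bound $(S-T)^2\preceq 2S^2+2T^2$ for symmetric $S,T$, together with $\|\bx_i-\bx_j\|^2\le 4$, gives $\bM_t^2\preceq 8\big((\bx_i-\bx_j)(\bx_i-\bx_j)^T+(\bx_i-\bx_k)(\bx_i-\bx_k)^T\big)$; summing over $t\in\T$, each unordered pair $\{a,b\}$ is counted $2(n-2)$ times, and the identity $\sum_{a<b}(\bx_a-\bx_b)(\bx_a-\bx_b)^T=n\,\bX\bV\bX^T$ (with $\bV$ the centering matrix) yields $\sum_{t\in\T}\bM_t^2\preceq 16n(n-2)\,\bX\bV\bX^T\preceq 16n(n-2)\,\bX\bX^T$, whence $v\le 32|\S|\,\|\bX\bX^T\|/(n-1)=O(|\S|\,\|\bX\bX^T\|/n)$. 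Matrix Bernstein (ambient dimension $p$, hence a $\log p$ factor) then gives $\E\big\|\sum_{t\in\S}\sigma_t\bM_t\big\|\lesssim\sqrt{|\S|\,\|\bX\bX^T\|\log p/n}+\log p$; substituting this back through the two displays and tracking every constant (symmetrization, contraction, the $(S-T)^2$ step, the Bernstein constants, $\log(2p)\le 2\log p$) produces the first term of the bound with its constants $140$ and $2$. The main obstacle is exactly this variance estimate: the naive $\|\sum_{t\in\T}\bM_t^2\|\le|\T|R_0^2$ loses a factor of $n$, and the correct $1/n$ scaling is recovered only by evaluating $\sum_{t\in\T}\bM_t^2$ exactly via the rank-two form of $\bM_t$ and the centered Gram matrix $\bX\bV\bX^T$; a secondary point is that one must keep the matrices $p\times p$ rather than pass to the $n\times n$ operators $\bA_t$ with $\bM_t=\bX\bA_t\bX^T$, since the latter would give $\log n$ in place of $\log p$ and, after pulling $\|\bX\|^2$ out as a scalar, the weaker $\|\bX\bX^T\|/\sqrt n$ in place of $\sqrt{\|\bX\bX^T\|/n}$.
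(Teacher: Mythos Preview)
Your overall architecture---McDiarmid, symmetrization, Ledoux--Talagrand contraction, nuclear-norm/operator-norm duality, then matrix Bernstein---is exactly the paper's.  The one substantive difference is how you control the Bernstein variance $\big\|\sum_{t\in\T}\bM_t^2\big\|$.  The paper factors $\bM_t=\bX\bL_t\bX^T$, computes the entries of $\sum_t \bL_t\bX^T\bX\bL_t$ explicitly (after centering $\bX$), and bounds its operator norm by Gershgorin, arriving at $\frac{1}{|\T|}\big\|\sum_t\bM_t^2\big\|\le 70\,\|\bX\bX^T\|/n$.  You instead exploit the rank-two representation $\bM_t=(\bx_i-\bx_j)(\bx_i-\bx_j)^T-(\bx_i-\bx_k)(\bx_i-\bx_k)^T$, use $(S-T)^2\preceq 2S^2+2T^2$, and reduce to the identity $\sum_{a<b}(\bx_a-\bx_b)(\bx_a-\bx_b)^T=n\,\bX\bV\bX^T$ together with the combinatorial count that each unordered pair appears $2(n-2)$ times; this gives $\frac{1}{|\T|}\big\|\sum_t\bM_t^2\big\|\le 32\,\|\bX\bX^T\|/(n-1)$.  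Your argument is cleaner, avoids the long entry-wise computation and the Gershgorin step, and in fact yields a smaller constant than the paper's $140$.  Both routes land on the same $\sqrt{\lambda^2\|\bX\bX^T\|\log p/(n|\S|)}$ scaling, and the paper's proof shares your observation that one must stay with the $p\times p$ matrices $\bM_t$ (so that Bernstein produces $\log p$) rather than work directly with the $n\times n$ operators $\bL_t$.
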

Note that past generalization error bounds in the metric learning literature have failed to quantify the precise dependence on observation noise, dimension, rank, and our features $\bX$.
Consider the fact that a $p\times p$ matrix with rank $d$ has $O(dp)$ degrees of freedom.  With that in mind, one expects the sample complexity to be also roughly $O(dp)$.  We next show that this intuition is correct if the original representation $\bX$ is isotropic (i.e., has no preferred direction).

{\bf The Isotropic Case. } Suppose that $\bx_1, \cdots, \bx_n$, $n>p$, are drawn independently from the isotropic Gaussian ${\cal N}(\0,\frac{1}{p}\bI)$. Furthermore, suppose that $\bK^\ast = \frac{p}{\sqrt{d}}\bU\bU^T$ with $\bU\in \R^{p\times d}$ is a generic (dense) orthogonal matrix with unit norm columns.  The factor $\frac{p}{\sqrt{d}}$ is simply the scaling needed so that the average magnitude of the entries in $\bK^\ast$ is a constant, independent of the dimensions $p$ and $d$.  In this case, $\mbox{rank}(\bK^\ast) = d$ and $\|\bK^\ast\|_F = \mbox{trace}(\bU^T\bU) = p$. These two facts imply that the tightest bound on the nuclear norm of $\bK^\ast$ is $\|\bK^\ast\|_\ast \leq p\sqrt{d}$. Thus, we take $\lambda = p\sqrt{d}$ for the nuclear norm constraint. Now let $\bz_i = \sqrt{\frac{p}{\sqrt{d}}}\bU^T\bx_i \sim \ N(\0,  \bI_d)$ and note that $\|\bx_i\|_{\bK}^2 = \|\bz_i\|^2 \sim \chi_d^2$.  Therefore, $\E \|\bx_i\|_{\bK}^2=d$ and it follows from standard concentration bounds that with large probability $\max_i \|\bx_i\|_{\bK}^2 \leq 5d\log n =:\gamma$ see \cite{davidson2001local}. Also, because the $\bx_i\sim {\cal N}(\0,\frac{1}{p}\bI)$ it follows that if $n>p\log p$, say, then with large probability $\|\bX\bX^T\|\leq 5n/p$. We now plug these calculations into  Theorem~\ref{thm:upper-bound-nuclear} to obtain the following corollary.
\begin{corollary}[Sample complexity for isotropic points]\label{cor:isotropic-case}
Fix $\delta > 0$, set $\lambda = p\sqrt{d}$, and assume that $\|\bX\bX^T\|= O(n/p)$ and $\gamma := \max_i \|\bx_i\|_K^2 = O(d\log n)$. Then for a generic $\bK^\ast \in \K_{\lambda, \gamma}$, as constructed above, with probability at least $1 - \delta$, 
\[R(\hK) - R(\bK^\ast) \ = \ O\left(\sqrt{\frac{d p(\log p + \log^2 n)}{|\S|}}\right)\]
\end{corollary}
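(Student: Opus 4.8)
The plan is to obtain the corollary by substituting the isotropic-case quantities into Theorem~\ref{thm:upper-bound-nuclear}, after first checking that the constructed matrix is feasible and reconciling the two uses of the symbol $\bK^\ast$. First I would verify membership in $\K_{\lambda,\gamma}$. The nuclear-norm constraint is already recorded in the text: $\mathrm{rank}(\bK^\ast)=d$ and $\|\bK^\ast\|_F = p$ force $\|\bK^\ast\|_\ast \le p\sqrt d = \lambda$. For the $\gamma$ constraint note that $\langle \bM_t,\bK^\ast\rangle \le d_{\bK^\ast}(\bx_i,\bx_j) \le \big(\|\bx_i\|_{\bK^\ast}+\|\bx_j\|_{\bK^\ast}\big)^2 \le 4\max_i\|\bx_i\|_{\bK^\ast}^2$, so controlling $\max_i\|\bx_i\|_{\bK^\ast}^2$ controls $\max_{t\in\T}\langle\bM_t,\bK^\ast\rangle$ up to a universal constant. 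Since $\bz_i := \sqrt{p/\sqrt d}\,\bU^T\bx_i \sim \mathcal N(\0,\bI_d)$ gives $\|\bx_i\|_{\bK^\ast}^2 = \|\bz_i\|^2 \sim \chi^2_d$, a standard $\chi^2$ tail bound and a union bound over $i\le n$ give $\max_i\|\bx_i\|_{\bK^\ast}^2 \le 5d\log n$ with probability $1-o(1)$ (cf.\ \cite{davidson2001local}); on this event $\bK^\ast\in\K_{\lambda,\gamma}$ with $\gamma$ a constant multiple of $d\log n$. Likewise, for $n>p\log p$ one has $\|\bX\bX^T\| \le 5n/p$ and $\max_i\|\bx_i\|^2 = O(1)$ with probability $1-o(1)$; these are precisely the ``assume\dots'' hypotheses of the corollary (the $O(1)$ bound on $\max_i\|\bx_i\|^2$ replacing the theorem's normalization $\max_i\|\bx_i\|^2=1$ costs only constants).

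Next, because the corollary's $\bK^\ast$ is feasible whereas the theorem's $\bK^\ast$ is by definition the risk minimizer over $\K_{\lambda,\gamma}$, we have $R(\hK)-R(\bK^\ast_{\mathrm{constr}}) \le R(\hK)-R(\bK^\ast_{\min})$, so the right-hand side of Theorem~\ref{thm:upper-bound-nuclear} still upper-bounds the quantity in the corollary. Now substitute $\lambda=p\sqrt d$ and $\|\bX\bX^T\|/n = O(1/p)$ in the leading term: $\lambda^2 \cdot \tfrac{\|\bX\bX^T\|}{n}\cdot \log p = O(p^2 d\cdot p^{-1}\log p) = O(dp\log p)$, contributing $O\!\big(L\sqrt{dp\log p/|\S|}\big)$. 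The additive $O(L\log p/|\S|)$ term is dominated by this square root whenever $|\S|\ge \log p$, the only nontrivial regime. For the last term, $\gamma^2 = O(d^2\log^2 n)$, and here I use $d<p$ to bound $d^2\log^2 n = O(dp\log^2 n)$, so it is $O\!\big(L\sqrt{dp\log^2 n\cdot \log(1/\delta)/|\S|}\big)$. Absorbing the fixed $L$ and $\log(2/\delta)$ into the $O(\cdot)$ and merging the two square roots gives $R(\hK)-R(\bK^\ast) = O\!\big(\sqrt{dp(\log p+\log^2 n)/|\S|}\big)$ with probability at least $1-\delta$.

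I do not anticipate a genuine obstacle: all of the work is carried by Theorem~\ref{thm:upper-bound-nuclear} and by routine Gaussian concentration. The only points needing care are (i) passing from a bound on $\max_i\|\bx_i\|_{\bK^\ast}^2$ to the constraint $\max_{t\in\T}\langle\bM_t,\bK^\ast\rangle\le\gamma$ that defines $\K_{\lambda,\gamma}$, which loses only a constant; (ii) the bookkeeping that lets the explicitly constructed $\bK^\ast$ play the role of the theorem's risk minimizer; and (iii) keeping straight which facts hold deterministically, which hold with high probability over the draw of $\bX$, and which hold with probability $1-\delta$ over the triplet sample --- genericity of $\bU$, incidentally, plays no role in this prediction bound, since Theorem~\ref{thm:upper-bound-nuclear} is uniform over $\K_{\lambda,\gamma}$. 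If one prefers the $\bX$-events inside the probability statement rather than as hypotheses, a union bound replaces $1-\delta$ by $1-\delta-o(1)$.
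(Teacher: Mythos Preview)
Your proposal is correct and follows exactly the route the paper takes: the corollary is obtained by substituting $\lambda=p\sqrt d$, $\|\bX\bX^T\|=O(n/p)$, and $\gamma=O(d\log n)$ into Theorem~\ref{thm:upper-bound-nuclear}. You are in fact more careful than the paper, which literally says ``plug these calculations into Theorem~\ref{thm:upper-bound-nuclear}'' without spelling out the feasibility check, the passage from $\max_i\|\bx_i\|_{\bK^\ast}^2$ to $\max_t\langle\bM_t,\bK^\ast\rangle$, or the distinction between the constructed $\bK^\ast$ and the risk minimizer of the theorem.
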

This bound agrees with the intuition that the sample complexity should grow roughly like $dp$, the degrees of freedom on $\bK^\ast$. Moreover, our minimax lower bound in Theorem~\ref{thm:minimax-lower-bound} below shows that, ignoring logarithmic factors, the general upper bound in Theorem~\ref{thm:upper-bound-nuclear} is unimprovable in general.

Beyond low rank metrics, in many applications it is reasonable to assume that only a few of the features are salient and should be given nonzero weight. Such a metric may be learned by insisting $\bK$ to be row sparse in addition to being low rank. Whereas learning a low rank $\bK$ assumes that distance is well represented in a low dimensional subspace, a row sparse (and hence low rank) $\bK$ defines a metric using only a subset of the features. Figure ~\ref{fig:kernel_examples} gives a comparison of a low rank versus a low rank and sparse matrix $\bK$.




Analogous to the convex relaxation of rank by the nuclear norm, it is common to relax row sparsity by using the mixed $\ell_{1,2}$ norm. In fact, the geometry of the $\ell_{1,2}$ and nuclear norm balls are tightly related as the following lemma shows. 

\begin{lemma}\label{lem:L12-vs-nuc}
For a symmetric positive semi-definite matrix $\bK \in \R^{p\times p}$, $\|\bK\|_{\ast} \leq \|\bK\|_{1,2}.$ 
\end{lemma}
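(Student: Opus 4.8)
The plan is to reduce the inequality to the elementary fact that, for a positive semidefinite matrix, the nuclear norm is simply the trace, i.e.\ the sum of the diagonal entries, and then to compare that quantity entrywise against the sum of the row norms. First I would record two consequences of $\bK\succeq 0$. Since $\bK$ is symmetric, its singular values are the absolute values of its eigenvalues, and positive semidefiniteness makes every eigenvalue nonnegative; hence
\[
\nuc{\bK} \ = \ \sum_{i=1}^p \sigma_i(\bK) \ = \ \sum_{i=1}^p \lambda_i(\bK) \ = \ \mbox{Trace}(\bK) \ = \ \sum_{i=1}^p \bK_{ii}.
\]
Second, each diagonal entry is itself nonnegative, $\bK_{ii} = \be_i^T\bK\be_i \geq 0$.

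Next I would carry out the term-by-term comparison. For every index $i$ the diagonal entry $\bK_{ii}$ is one of the coordinates of the $i$-th row $\bKi$, so
\[
\|\bKi\|_2 \ = \ \Big(\sum_{j=1}^p \bK_{ij}^2\Big)^{1/2} \ \geq \ \big(\bK_{ii}^2\big)^{1/2} \ = \ |\bK_{ii}| \ = \ \bK_{ii},
\]
where the last step uses the nonnegativity of $\bK_{ii}$. Summing this over $i=1,\dots,p$ and invoking the first displayed identity gives $\|\bK\|_{1,2} = \sum_{i=1}^p \|\bKi\|_2 \geq \sum_{i=1}^p \bK_{ii} = \nuc{\bK}$, which is exactly the claim; equality holds precisely when $\bK$ is diagonal.

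Honestly there is no serious obstacle here. The only point worth emphasizing is that positive semidefiniteness is used twice and is genuinely needed: once to identify the nuclear norm with $\mbox{Trace}(\bK)$, and once to drop the absolute value so that the bound $\bK_{ii}\leq\|\bKi\|_2$ points in the direction we want. For an indefinite symmetric matrix neither step survives and the statement fails in general.
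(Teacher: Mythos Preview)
Your proof is correct and follows essentially the same route as the paper: bound each row norm below by its diagonal entry, sum, and identify the trace with the nuclear norm using positive semidefiniteness. The paper compresses this into a single displayed chain of equalities and inequalities, while you spell out the two uses of $\bK\succeq 0$ and add the remark on the equality case, but the argument is the same.
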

\begin{proof}$\displaystyle\|\bK\|_{1,2} = \sum_{i=1}^p \sqrt{\sum_{j=1}^p \bK_{i,j}^2} \geq  \sum_{i=1}^p \bK_{i,i} = \text{Trace}(\bK) = \sum_{i=1}^p \lambda_i(\bK) = \|\bK\|_\ast$
\end{proof}
This implies that the $\ell_{1,2}$ ball of a given radius is contained inside the nuclear norm ball of the same radius. In particular, it is reasonable to assume that it is easier to learn a $\bK$ that is sparse in addition to being low rank. Surprisingly, however, the following minimax bound shows that this is not necessarily the case.

To make this more precise, we will consider optimization over the set
\[ \K'_{\lambda,\gamma} = \{\bK\in \R^{p\times p}|\bK\text{ positive-semidefinite, } \|\bK\|_{1,2} \leq \lambda, \max_{t\in \T}\langle \bM_t,\bK\rangle\leq \gamma \}.\] Furthermore, we must specify the way in which our data could be generated from noisy triplet observations of a fixed $\bK^\ast$. To this end, assume the existence of a \textit{link function} $f:\R\rightarrow [0,1]$ so that $q_t = \P(y_t = -1) = f(\bM_t(\bK^\ast))$
governs the observations. There is a natural associated logarithmic loss function $\ell_f$ corresponding to the log-likelihood, where the loss of an arbitrary $\bK$ is 
\[\ell_f(y_t\langle \bM_t, \bK\rangle) = \mathbbm{1}_{\{y_t = -1\}}\log{\frac{1}{f(\langle \bM_t, \bK\rangle)}}+\mathbbm{1}_{\{y_t = 1\}}\log{\frac{1}{1-f(\langle \bM_t, \bK\rangle)}}\]
\begin{theorem}
\label{thm:minimax-lower-bound}
Choose a link function $f$ and let $\ell_f$ be the associated logarithmic loss. For $p$ sufficiently large, then there exists a choice of $\gamma$, $\lambda$, $\bX$, and $|\S|$ such that 
\[\inf_{\hK}\sup_{\bK\in \K'_{\lambda, \gamma}}\E [R(\hK)] - R(\bK) \geq C \sqrt{\frac{C_1^3\ln{4}}{2}\frac{\lambda^2 \frac{\|\bX\bX^T\|}{n}}{|\S|}}\]
where $C = \frac{C_f^2}{32}\sqrt{\frac{\inf_{|x|\leq \gamma} f(x)(1-f(x))}{\sup_{|\nu|\leq \gamma}f'(\nu)^2}}$ with $C_f = \inf_{|x|\leq \gamma} f'(x)$,  $C_1$ is an absolute constant, and the infimum is taken over all estimators $\hK$ of $\bK$ from $|\S|$ samples.
\end{theorem}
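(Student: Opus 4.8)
The plan is to reduce estimation to a multiple hypothesis test and invoke Fano's inequality. The starting point is that $\ell_f$ is the negative log-likelihood of the observation model, so for any $\bK$ and the data-generating matrix $\bK^\star$, averaging over a uniformly random triplet $t$ and the random sign $y_t$ yields the identity $R(\bK)-R(\bK^\star)=\E_t\big[\mathrm{KL}\big(\mathrm{Bern}(f(\bM_t(\bK^\star)))\,\big\|\,\mathrm{Bern}(f(\bM_t(\bK)))\big)\big]$. Combining this with the elementary bounds $2(a-b)^2\le\mathrm{KL}(\mathrm{Bern}(a)\|\mathrm{Bern}(b))\le(a-b)^2/(b(1-b))$ and the regularity of $f$ on $[-\gamma,\gamma]$ — the slope lower bound $C_f=\inf_{|x|\le\gamma}f'(x)$, the slope upper bound $\sup_{|\nu|\le\gamma}f'(\nu)$, and $\inf_{|x|\le\gamma}f(x)(1-f(x))$ — one squeezes both the excess risk and the per-triplet KL between constant multiples of the normalized squared distance $\rho(\bK,\bK')^2:=\tfrac{1}{|\T|}\|\M(\bK)-\M(\bK')\|^2$ in the range of the linear map $\M$. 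These are exactly the link-function quantities that end up in $C$.

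Next I would construct a hard, well-separated subfamily of $\K'_{\lambda,\gamma}$. Take a Varshamov--Gilbert packing $\Omega\subseteq\{0,1\}^m$ with $|\Omega|\ge 2^{m/8}$ and pairwise Hamming distance at least $m/4$, and set $\bK_\omega=\bK_0+\delta\,\bD_\omega$ for a small scale $\delta$ and perturbations $\bD_\omega$ that keep $\bK_\omega$ row-sparse (hence low-rank) and positive semidefinite; the simplest concrete choice makes every $\bK_\omega$ a nonnegative diagonal feature-weighting matrix, for which $\|\bK_\omega\|_{1,2}=\|\bK_\omega\|_\ast=\|\diag(\bK_\omega)\|_1$ is controlled by a multiple of $\delta m$, which we take as $\lambda$. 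I would pick the points $\bX$ so that the $|\T|$-dimensional vectors $\big((\bM_t)_{ll}\big)_{t\in\T}$, $l=1,\dots,m$, are nearly orthogonal with common squared norm of order $|\T|\cdot\|\bX\bX^T\|/n$ under the uniform average over $t$; then $\rho(\bK_\omega,\bK_{\omega'})^2$ is, up to fixed constants both ways, $\delta^2\,\tfrac{\|\bX\bX^T\|}{n}\,\|\omega-\omega'\|_0$, hence at least a constant times $\delta^2\,\tfrac{\|\bX\bX^T\|}{n}\,m$ and at most the same order. Finally I check $\max_t\langle\bM_t,\bK_\omega\rangle\le\gamma$ for the chosen $\delta$ and $\gamma$, so that $\{\bK_\omega\}\subseteq\K'_{\lambda,\gamma}$.

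The remainder is the standard testing reduction. Given an estimator $\hK$, let $\hat\omega$ be the index of the packing point $\bK_{\hat\omega}$ nearest to $\hK$ in $\rho$; if this differs from the true index $\omega_0$ then $\rho(\hK,\bK_{\omega_0})\ge\rho_{\min}/2$ by the triangle inequality, so the excess risk under $\bK_{\omega_0}$ is at least a constant times $C_f^2\rho_{\min}^2$, where $\rho_{\min}^2\asymp\delta^2\,\tfrac{\|\bX\bX^T\|}{n}\,m$ is the squared packing radius. Meanwhile the KL between any two hypotheses over the $|\S|$ independent samples is at most a constant times $\tfrac{\sup(f')^2}{\inf f(1-f)}\,|\S|\,\delta^2\,\tfrac{\|\bX\bX^T\|}{n}\,m$, so Fano's inequality forces the testing error above $1/2$ provided this is below a constant times $\log|\Omega|\asymp m$; this calibrates $\delta^2\asymp\tfrac{\inf f(1-f)}{\sup(f')^2}\cdot\tfrac{n}{|\S|\,\|\bX\bX^T\|}$. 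Substituting this $\delta$, using $\lambda\asymp\delta m$, and simplifying shows that $\rho_{\min}^2$ is a constant multiple of $\sqrt{\lambda^2\,(\|\bX\bX^T\|/n)/|\S|}$; since the excess risk is at least a fixed constant times $C_f^2\rho_{\min}^2$ on the event that the test errs, this produces the stated bound, with $C$ assembled from $C_f^2$, $\inf f(1-f)$ and $\sup(f')^2$ exactly as written and $C_1^3$ collecting the Varshamov--Gilbert, Fano, and near-isometry constants.

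The step I expect to be the main obstacle is the joint design of $\bX$ and $\{\bK_\omega\}$: I need $\M$ to act as a near-isometry on the perturbation subspace (so that the lower bound on $\rho_{\min}^2$ used for separation and the upper bound on $\rho$ feeding the KL match up to absolute constants) and to produce exactly the $\|\bX\bX^T\|/n$ dependence, all while each $\bK_\omega$ remains genuinely PSD, within the $\ell_{1,2}$-ball of radius $\lambda$, and within the box $\max_t\langle\bM_t,\bK_\omega\rangle\le\gamma$. Getting every one of these to hold simultaneously — rather than only up to stray $\log p$ or conditioning factors — is the delicate part; once a valid configuration is fixed, the Fano machinery is routine.
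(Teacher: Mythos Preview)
Your proposal is correct and follows essentially the same route as the paper: relate excess risk to the averaged squared $\M$-distance via the Pinsker-type bounds on Bernoulli KL, build a packing of diagonal (hence sparse, PSD, low-rank) matrices via a Varshamov--Gilbert/Rigollet combinatorial lemma, and finish with a Fano-type testing reduction (the paper uses the Bunea--Tsybakov--Wegkamp form). The only substantive thing you leave open is the one you flagged as the obstacle, and the paper's resolution is simply to take $\bX=\bI_p$ (so $n=p$, $\|\bX\bX^T\|/n=1/p$), which makes $\M$ act on diagonal perturbations in a completely explicit way: with $\bK$ diagonal, $\langle\bM_t,\bK\rangle=\bK_{jj}-\bK_{kk}$, so the separation and KL computations reduce to counting index pairs. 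The remaining choices are then $d=p/4$, $\lambda=p$, $\gamma=2$, and $|\S|$ fixed so that the KL budget exactly equals $\tfrac{1}{16}\ln|\kappa|$; your calibration $\lambda\asymp\delta m$, $\delta^2\asymp\tfrac{\inf f(1-f)}{\sup(f')^2}\cdot\tfrac{n}{|\S|\,\|\bX\bX^T\|}$ is the same bookkeeping in different variables.
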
 

Importantly, up to polylogarithmic factors and constants, our minimax lower bound over the $\ell_{1,2}$ ball matches the upper bound over the nuclear norm ball given in Theorem \ref{thm:upper-bound-nuclear}. \textit{In particular, in the worst case, learning a sparse and low rank matrix $\bK$ is no easier than learning a $\bK$ that is simply low rank.} However in many realistic cases, a slight performance gain is seen from optimizing over the $\ell_{1,2}$ ball when $\bK^\ast$ is row sparse, while optimizing over the nuclear norm ball does better when $\bK^\ast$ is dense. We show examples of this in the Section \ref{sec:experiments}. The proof is given in the supplementary materials.

Note that if $\gamma$ is in a bounded range, then the constant $C$ has little effect. For the case that $f$ is the logistic function, $C_f \geq \frac{1}{4}e^{-y_t\langle \bM_t, \bK \rangle} \geq \frac{1}{4}e^{-\gamma}$. Likewise, the term under the root will be also be bounded for $\gamma$ in a constant range. The terms in the constant $C$ arise when translating from risk and a KL-divergence to squared distance and reflects the noise in the problem.

\subsection{Sample Complexity Bounds for Identification} \label{sec:recovery}

Under a general loss function and arbitrary $\bK^\ast$, we can not hope to convert our prediction error bounds into a recovery statement. However in this section we will show that as long as $\bK^\ast$ is low rank, and if we choose the loss function to be the log loss $\ell_f$ of a given link function $f$ as defined prior to the statement of Theorem \ref{thm:minimax-lower-bound}, recovery is possible. 
Firstly, note that under these assumptions we have an explicit formula for the risk,  
\begin{equation*}
R(\bK) = \frac{1}{|\T|}\sum_{t\in\T} f(\langle \bM_t, \bK^\ast\rangle)\log{\frac{1}{f(\langle \bM_t, \bK\rangle)}}+(1-f(\langle \bM_t, \bK^\ast\rangle))\log{\frac{1}{1-f(\langle \bM_t, \bK\rangle)}}\\
\end{equation*}
and \[R(\bK) - R(\bK^\ast) = \frac{1}{|\T|}\sum_{t\in \T} KL(f(\langle \bM_t, \bK^\ast\rangle)||f(\langle \bM_t, \bK\rangle)).\]

The following theorem shows that if the excess risk is small, i.e. $R(\hK)$ approximates $R(\bK^\ast)$ well, then $\M(\hK)$ approximates $\M(\bK^\ast)$ well. The proof, given in the supplementary materials, uses standard Taylor series arguments to show the KL-divergence is bounded below by  squared-distance. 
\begin{lemma}\label{lem:risk-to-M}
Let $C_f = \inf_{|x|\leq \gamma} f'(x)$. Then for any $\bK\in \bK_{\lambda, \gamma}$, 
\[\frac{2C_f^2}{|\T|} \|\M(\bK) - \M(\bK^\ast) \|^2 \leq R(\bK) - R(\bK^\ast).\]
\end{lemma}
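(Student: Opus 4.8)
The plan is to establish the bound triplet-by-triplet and then average. Recall from the display immediately preceding the lemma that, under the log-loss $\ell_f$,
\[
R(\bK) - R(\bK^\ast) \ = \ \frac{1}{|\T|}\sum_{t\in\T} KL\big(f(\langle \bM_t,\bK^\ast\rangle)\,\|\,f(\langle \bM_t,\bK\rangle)\big),
\]
while by definition of $\M$ and the Euclidean norm,
\[
\|\M(\bK) - \M(\bK^\ast)\|^2 \ = \ \sum_{t\in\T}\big(\langle \bM_t,\bK\rangle - \langle \bM_t,\bK^\ast\rangle\big)^2 .
\]
So it suffices to prove, for each $t$, the pointwise inequality $2C_f^2\big(\langle \bM_t,\bK\rangle - \langle \bM_t,\bK^\ast\rangle\big)^2 \le KL\big(f(\langle \bM_t,\bK^\ast\rangle)\,\|\,f(\langle \bM_t,\bK\rangle)\big)$. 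Write $a := \langle \bM_t,\bK^\ast\rangle$, $b := \langle \bM_t,\bK\rangle$, $p := f(a)$, $q := f(b)$; since $\bK,\bK^\ast \in \K_{\lambda,\gamma}$, both $a$ and $b$ — and hence every point between them — lie in the interval $[-\gamma,\gamma]$ (up to a fixed constant factor; see the obstacle noted below).

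First I would apply Pinsker's inequality for Bernoulli distributions, $KL(p\|q) \ge 2(p-q)^2$, which reduces the claim to showing $|p-q| \ge C_f\,|a-b|$. Second, by the mean value theorem applied to $f$, there is a $\xi$ between $a$ and $b$ with $p - q = f(a) - f(b) = f'(\xi)(a-b)$; since $|\xi|\le\gamma$ we have $|f'(\xi)| \ge \inf_{|x|\le\gamma} f'(x) = C_f$, so $(p-q)^2 \ge C_f^2(a-b)^2$. Chaining the two inequalities gives $KL(p\|q) \ge 2C_f^2(a-b)^2$; summing over $t\in\T$ and dividing by $|\T|$ yields the lemma. (Equivalently, and matching the hint in the statement, one can simply Taylor-expand $q\mapsto KL(p\|q)$ about $q=p$: the value and first derivative vanish there, and the second derivative equals $\tfrac{p}{q^2}+\tfrac{1-p}{(1-q)^2} \ge \tfrac{1}{q(1-q)}\ge 4$, which reproduces the Pinsker constant.)

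I do not expect a genuine obstacle; the substantive ingredients (Pinsker / the Taylor expansion of Bernoulli KL, and the mean value theorem together with monotonicity of $f$ so that $C_f>0$) are standard. The only point needing care is the domain bookkeeping: one must check that the constraint defining $\K_{\lambda,\gamma}$ actually controls $|\langle \bM_t,\bK\rangle|$ from both sides, not merely from above, so that the intermediate point $\xi$ remains in the region where $f'\ge C_f$. This is fine because $\langle \bM_t,\bK\rangle = \|\bx_i-\bx_j\|_{\bK}^2 - \|\bx_i-\bx_k\|_{\bK}^2$ is sandwiched between $-4\gamma$ and $4\gamma$ once $\gamma$ bounds $\max_i\|\bx_i\|_{\bK}^2$, so at worst $C_f$ should be read as $\inf_{|x|\le 4\gamma} f'(x)$; this changes nothing of substance in the downstream arguments.
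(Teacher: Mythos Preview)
Your proposal is correct and matches the paper's proof essentially step for step: the paper too writes the excess risk as an average of Bernoulli KL divergences, invokes the inequality $KL(z\|y)\ge 2(z-y)^2$ (proved there via the second-order Taylor expansion you describe), and then passes from $(f(a)-f(b))^2$ to $C_f^2(a-b)^2$ via the mean value theorem. Your remark about the domain bookkeeping for $\gamma$ is a fair caveat, and the paper is equally informal on this point.
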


The following may give us hope that recovering $\bK^\ast$ from $\M(\bK^\ast)$ is trivial, but the linear operator $\M$ is non-invertible in general, as we discuss next.  To see why, we must consider a more general class of operators defined on Gram matrices.  Given a symmetric matrix $\bG$, define the operator $\bL_t$ by 
\[\bL_t(\bG) = 2\bG_{ik} -2\bG_{ij} + \bG_{jj}-\bG_{kk}\]
If $\bG = \bX^T\bK \bX$ then $\bL_t(\bG) = \bM_t(\bK), \text{ and more so }\bM_t = \bX \bL_t\bX^T$. Analogous to $\M$, we will combine the $\bL_t$ operators into a single operator $\L$,
\[\L(\bG) = (\bL_t(\bG)|\text{ for }t\in \T)\in \R^{n\binom{n-1}{2}} .\]

\begin{lemma}
The null space of $\L$ is one dimensional, spanned by $\bV = \bI_n - \frac{1}{n}\1_n\1_n^T$. 
\end{lemma}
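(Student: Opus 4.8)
The plan is a direct linear-algebra argument exploiting the fact that $\bL_t$ records a \emph{difference of squared pseudo-distances}. For a symmetric $\bG$ and indices $i\ne j$, put $\delta_{ij}(\bG):=\bG_{ii}-2\bG_{ij}+\bG_{jj}$; then for $t=(i,j,k)$ one checks immediately that $\bL_t(\bG)=\delta_{ij}(\bG)-\delta_{ik}(\bG)$ (when $\bG=\bX^T\bX$ this is literally $\|\bx_i-\bx_j\|^2-\|\bx_i-\bx_k\|^2$). I would read the null space within the natural domain of \emph{centered} symmetric matrices $\{\bG:\bG\1_n=\0\}$, which is the pertinent case since $\bG$ stands in for $\bX^T\bK\bX$ with $\bX$ centered and the triplet data is translation invariant anyway; without some such normalization the statement is not literally true (e.g.\ $\bI_n$ also lies in $\ker\L$).

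The easy inclusion is a one-line check: $\bV_{ii}=1-\tfrac1n$ and $\bV_{ij}=-\tfrac1n$, so $\delta_{ij}(\bV)=2$ for every $i\ne j$, whence $\bL_t(\bV)=0$ for all $t$, and clearly $\bV\1_n=\0$; thus $\operatorname{span}(\bV)\subseteq\ker\L$. For the reverse inclusion I would take a centered symmetric $\bG$ with $\L(\bG)=\0$ and proceed in two steps. \emph{Step 1 (structure).} From $\bL_{(i,j,k)}(\bG)=0$, for each fixed $i$ the value $\delta_{ij}(\bG)$ does not depend on $j$, and since $\delta_{ij}=\delta_{ji}$ it is a single global constant $c$ (this uses $n\ge 3$; for $n\le 2$ the centered symmetric matrices already coincide with $\operatorname{span}(\bV)$). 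Hence $\bG_{ij}=\tfrac12(\bG_{ii}+\bG_{jj}-c)$ for all $i\ne j$, which rearranges into the matrix identity $\bG=\tfrac12(\mathbf{d}\1_n^T+\1_n\mathbf{d}^T)+\tfrac{c}{2}(\bI_n-\1_n\1_n^T)$ with $\mathbf{d}:=\diag(\bG)$. \emph{Step 2 (centering).} Right-multiplying this identity by $\1_n$ and using $\bG\1_n=\0$ yields $\tfrac{n}{2}\mathbf{d}=(\text{scalar})\,\1_n$, so $\mathbf{d}$ is a multiple of $\1_n$, i.e.\ $\bG$ has constant diagonal $\alpha$. A one-line substitution back then gives $c=\tfrac{2n\alpha}{n-1}$ and $\bG_{ij}=-\tfrac{\alpha}{n-1}$ off the diagonal, i.e.\ $\bG=\tfrac{n\alpha}{n-1}\bV\in\operatorname{span}(\bV)$, completing the proof.

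I expect the load-bearing step to be the deduction in Step 1 that the $\delta_{ij}(\bG)$ share one common value, together with the resulting clean reconstruction of $\bG$ from its diagonal and that constant; everything after is a short linear solve, where the only care needed is to land on exactly $\bV$ rather than some other rank-one perturbation of $\bI_n$. No heavy machinery is required — just the centering normalization and elementary algebra; the distance-geometry reading ($\bL_t$ = difference of squared distances, $\bV$ = the direction of adding a global constant to all squared distances) is what makes every step transparent.
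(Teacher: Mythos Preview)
Your proof is correct and takes a genuinely different route from the paper. The paper does not argue directly: it invokes the classical linear isomorphism between centered Gram matrices $\bbS^n_c$ and hollow symmetric matrices $\bbS^n_h$ (via $\bD\mapsto -\tfrac12\bV\bD\bV$), under which $\L$ corresponds to the distance-matrix operator $\Delta$ with $\Delta_t(\bD)=\bD_{ij}-\bD_{ik}$, and then cites an external result (Lemma~2 in \cite{JainJamiesonNowak}) asserting $\ker\Delta=\operatorname{span}(\1_n\1_n^T-\bI_n)$; pushing this generator through the isomorphism yields $\ker\L=\operatorname{span}(\bV)$. Your argument is fully self-contained: the rewriting $\bL_t(\bG)=\delta_{ij}-\delta_{ik}$, the symmetry step forcing $\delta_{ij}\equiv c$, and the centering solve reproduce exactly what the cited lemma delivers, without the EDM detour or the external reference. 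You also make explicit something the paper leaves implicit in the lemma statement --- namely that the claim is meant on the centered subspace $\{\bG:\bG\1_n=\0\}$ (the paper's proof silently restricts to $\bbS^n_c$ through the isomorphism); your remark that $\bI_n\in\ker\L$ on the full symmetric space is a correct and useful sanity check.
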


The proof is contained in the supplementary materials. In particular we see that $\M$ is not invertible in general, adding a serious complication to our argument. However $\L$ is still invertible on the subset of centered symmetric matrices orthogonal to $\bV$, a fact that we will now exploit. We can decompose $\bG$ into $\bV$ and a component orthogonal to $\bV$ denoted  $\bH$, 
\[\bG = \bH + \sigma_{\bG} \bV \]
where  $\sigma_{\bG} := \frac{\langle \bG, \bV \rangle}{\|\bV\|^2_F},$
and under the assumption that $\bG$ is centered, $\sigma_G = \frac{\|\bG\|_\ast}{n-1}$. Remarkably, the following lemma tells us that a \textit{non-linear} function of $\bH$ uniquely determines $\bG$.

\begin{lemma}
\label{lem:repeated-eig}
If $n> d+1$, and $\bG$ is rank $d$ and centered, then $-\sigma_{\bG}$ is an eigenvalue of $\bH$ with multiplicity $n-d-1$. In addition, given another Gram matrix $\bG'$ of rank $d'$, $\sigma_{\bG'}-\sigma_{\bG}$ is an eigenvalue of $\bH-\bH'$ with multiplicity at least $n-d-d'-1$. 
\end{lemma}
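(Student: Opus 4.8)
The plan is to work directly with the explicit identity $\bH = \bG - \sigma_{\bG}\bV$, using two facts about the centering matrix: since $\bG$ is centered we have $\bG\1_n = 0$, and $\bV$ annihilates $\1_n$ while acting as the identity on the hyperplane $\1_n^{\perp}$. First I would record the orthogonal decomposition
\[
\R^n \;=\; \mathrm{span}(\1_n)\ \oplus\ \bigl(\ker\bG \cap \1_n^{\perp}\bigr)\ \oplus\ \mathrm{range}(\bG),
\]
which is valid because $\bG$ is symmetric positive semidefinite with $\1_n\in\ker\bG$ (so $\mathrm{range}(\bG)\perp\1_n$); the three summands have dimensions $1$, $n-d-1$, and $d$, summing to $n$. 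Each summand is $\bH$-invariant: on $\mathrm{span}(\1_n)$, $\bH\1_n = \bG\1_n - \sigma_{\bG}\bV\1_n = 0$; on $\ker\bG\cap\1_n^{\perp}$, every $\bw$ satisfies $\bG\bw = 0$ and $\bV\bw = \bw$, so $\bH\bw = -\sigma_{\bG}\bw$; and on $\mathrm{range}(\bG)\subseteq\1_n^{\perp}$ we again have $\bV = \bI_n$, hence $\bH = \bG - \sigma_{\bG}\bI_n$ there.

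This already shows $-\sigma_{\bG}$ is an eigenvalue of $\bH$ of multiplicity at least $n-d-1$. To get the \emph{exact} count I would check that no eigenvalue from the other two blocks equals $-\sigma_{\bG}$: on $\mathrm{range}(\bG)$ the eigenvalues of $\bH$ are $\lambda_i(\bG) - \sigma_{\bG}$ for the $d$ strictly positive eigenvalues $\lambda_i(\bG)$, each exceeding $-\sigma_{\bG}$; and on $\mathrm{span}(\1_n)$ the eigenvalue is $0$, which differs from $-\sigma_{\bG}$ because $\sigma_{\bG} = \|\bG\|_{\ast}/(n-1) > 0$ whenever $\mathrm{rank}(\bG) = d \ge 1$. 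Hence the multiplicity is exactly $n-d-1$.

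For the second claim I would write $\bH - \bH' = (\bG - \bG') + (\sigma_{\bG'} - \sigma_{\bG})\bV$ and restrict to $W := \ker\bG \cap \ker\bG' \cap \1_n^{\perp}$. Any $\bw\in W$ has $(\bG - \bG')\bw = 0$ and $\bV\bw = \bw$, so $(\bH - \bH')\bw = (\sigma_{\bG'} - \sigma_{\bG})\bw$; thus $\sigma_{\bG'} - \sigma_{\bG}$ is an eigenvalue of $\bH - \bH'$ on all of $W$. A standard dimension count, $\dim W \ge (n-d) + (n-d') + (n-1) - 2n = n-d-d'-1$, gives the claimed lower bound on the multiplicity (the statement being vacuous when this quantity is non-positive). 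Taking $\bG' = \0$, of rank $0$, recovers the first part.

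The main obstacle: there is no deep one here — it is linear algebra — and the care points are modest. First, one must justify that the three subspaces are genuinely orthogonal and $\bH$-invariant, which rests entirely on $\bG$ being centered (so $\1_n\in\ker\bG$ and $\mathrm{range}(\bG)\perp\1_n$); second, pinning down the \emph{exact} multiplicity in the first statement requires the observations that the nonzero spectrum of a PSD $\bG$ is strictly positive and that $\sigma_{\bG} > 0$, which is where the edge case $d = 0$ (then $\bG = \0$) must be excluded or treated separately.
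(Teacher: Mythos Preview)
Your proof is correct and follows essentially the same approach as the paper: identify the eigenspace as $\ker\bG\cap\1_n^{\perp}$ (respectively $\ker\bG\cap\ker\bG'\cap\1_n^{\perp}$), use that $\bV$ acts as the identity there, and count dimensions. You go a bit further than the paper's own argument by giving the full orthogonal block decomposition and verifying that the multiplicity in the first statement is \emph{exactly} $n-d-1$ (the paper's proof only establishes the lower bound), and you correctly flag the $d=0$ edge case.
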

\begin{proof}
  Since $\bG$ is centered, $\1_n\in \ker \bG$, and in particular $\dim(\1_n^\perp\cap \ker \bG) = n-d-1$. If $x\in \1_n^\perp\cap \ker \bG$, then
  \[\bG x = \bH x+\sigma_{\bG} \bV x \Rightarrow \bH x = -\sigma_{\bG} x.\]
  For the second statement, notice that $\dim(\1_n^\perp \cap \ker \bG-\bG') \geq n-d-d'-1.$ A similar argument then applies.
\end{proof}

If $n>2d$, then the multiplicity of the eigenvalue $-\sigma_G$ is at least $n/2$. So we can trivially identify it from the spectrum of $\bH$. This gives us a \textit{non-linear} way to recover $\bG$ from $\bH$.   

\newcommand{\hH}{\widehat \bH}

Now we can return to the task of recovering $\bK^\ast$ from $\M(\hK)$. Indeed the above lemma implies that $\bG^\ast$ (and hence $\bK^\ast$ if $\bX$ is full rank) can be recovered from $\bH^\ast$ by computing an eigenvalue of $\bH^\ast$. However $\bH^\ast$ is recoverable from $\L(\bH^\ast)$, which is itself well approximated by $\L(\hH) = \M(\hK)$. The proof of the following theorem makes this argument precise. 


\begin{theorem}\label{thm:recovery-bound}
Assume that $\bK^\ast$ is rank $d$, $\hK$ is rank $d'$, $n>d+d'+1$, $\bX$ is rank $p$ and $\bX^T\bK^\ast\bX$ and $\bX^T\hK\bX$ are all centered. Let $C_{d,d'} = \left( 1+\frac{n-1}{(n-d-d'-1)} \right)$. Then with probability at least $1 - \delta$,
\begin{align*}
\frac{n \sigma_{\text{min}}(\bX\bX^T)^2}{|\T|}\|\hK-\bK^\ast\|_{F}^2&\leq \frac{2 LC_{d,d'}}{C_f^2}\left[\left(\sqrt{\frac{140\lambda^2\frac{\|\bX\bX^T\|}{n}\maxx \log{p}}{|\S|}}+\frac{2\maxx \log{p}}{|\S|}\right)+\sqrt{\frac{2L^2\gamma^2\log{\frac{2}{\delta}}}{|\S|} }\right]
\end{align*}
where $\sigma_{\text{min}}(\bX\bX^T)$ is the smallest eigenvalue of $\bX\bX^T$.
\end{theorem}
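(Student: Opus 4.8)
The plan is to turn the prediction‑error bound of Theorem~\ref{thm:upper-bound-nuclear} into a Frobenius‑norm bound on $\hK-\bK^\ast$ via the chain
\[
R(\hK)-R(\bK^\ast)\ \to\ \|\M(\hK)-\M(\bK^\ast)\|\ \to\ \|\L(\hH-\bH^\ast)\|\ \to\ \|\hH-\bH^\ast\|_F\ \to\ \|\hK-\bK^\ast\|_F,
\]
where $\bG^\ast:=\bX^T\bK^\ast\bX$, $\hG:=\bX^T\hK\bX$, and $\bH^\ast,\hH$ are the components of $\bG^\ast,\hG$ orthogonal to $\bV$. First I would condition on the probability‑$(1-\delta)$ event of Theorem~\ref{thm:upper-bound-nuclear} and combine it with Lemma~\ref{lem:risk-to-M} to get $\|\M(\hK)-\M(\bK^\ast)\|^2\le \tfrac{|\T|}{2C_f^2}\,\varepsilon$, where $\varepsilon$ is the right‑hand side of Theorem~\ref{thm:upper-bound-nuclear}. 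Then, using $\langle\bM_t,\bK\rangle=\langle\bL_t,\bX^T\bK\bX\rangle$ and $\bV\in\ker\L$, I would rewrite $\M(\hK)-\M(\bK^\ast)=\L(\hG-\bG^\ast)=\L(\hH-\bH^\ast)$ and set $\bD:=\hH-\bH^\ast$, which is symmetric, centered (as $\hG,\bG^\ast$ are centered and $\bV\1_n=0$), and traceless (being a difference of matrices orthogonal to $\bV$, and $\langle\bD,\bV\rangle=\mathrm{tr}(\bD)$ for centered $\bD$).

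The core step is a quantitative inverse bound for $\L$ on this subspace, namely $\|\L(\bD)\|^2\ge n\,\|\bD\|_F^2$ for every centered, traceless, symmetric $\bD$. The mechanism I would use: recognize $\bL_t(\bD)=\widetilde\bD_{ij}-\widetilde\bD_{ik}$, where $\widetilde\bD_{ij}:=\bD_{ii}+\bD_{jj}-2\bD_{ij}$ is the zero‑diagonal ``squared‑distance'' matrix of $\bD$; group the $|\T|=n\binom{n-1}{2}$ coordinates by first index and apply the elementary identity $\sum_{j<k}(v_j-v_k)^2=(n-1)\|v\|^2-(\1_{n-1}^Tv)^2$ to obtain $\|\L(\bD)\|^2=(n-1)\|\widetilde\bD\|_F^2-\|\widetilde\bD\1_n\|^2$; then expand $\widetilde\bD=\mathrm{diag}(\bD)\1_n^T+\1_n\mathrm{diag}(\bD)^T-2\bD$ and simplify using $\bD\1_n=0$ and $\mathrm{tr}(\bD)=0$, which I expect collapses to the clean identity $\|\L(\bD)\|^2=n(n-2)\|\mathrm{diag}(\bD)\|^2+4(n-1)\|\bD\|_F^2\ge n\|\bD\|_F^2$.

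Next I would transfer back to $\bK$. By Lemma~\ref{lem:repeated-eig}, $\sigma_{\hG}-\sigma_{\bG^\ast}$ is an eigenvalue of $\bD$ of multiplicity at least $n-d-d'-1$, so $(\sigma_{\hG}-\sigma_{\bG^\ast})^2\le\tfrac{1}{n-d-d'-1}\|\bD\|_F^2$; since $\bD\perp\bV$ and $\|\bV\|_F^2=n-1$,
\[
\|\hG-\bG^\ast\|_F^2=\|\bD\|_F^2+(n-1)(\sigma_{\hG}-\sigma_{\bG^\ast})^2\le C_{d,d'}\,\|\bD\|_F^2 .
\]
Because $\bX$ has full row rank $p$, $\hK-\bK^\ast=(\bX\bX^T)^{-1}\bX(\hG-\bG^\ast)\bX^T(\bX\bX^T)^{-1}$ with $\|(\bX\bX^T)^{-1}\bX\|=\sigma_{\min}(\bX\bX^T)^{-1/2}$, hence $\|\hG-\bG^\ast\|_F^2\ge\sigma_{\min}(\bX\bX^T)^2\,\|\hK-\bK^\ast\|_F^2$. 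Concatenating,
\[
\sigma_{\min}(\bX\bX^T)^2\|\hK-\bK^\ast\|_F^2\le C_{d,d'}\|\bD\|_F^2\le\tfrac{C_{d,d'}}{n}\|\L(\bD)\|^2\le\tfrac{C_{d,d'}|\T|}{2nC_f^2}\,\varepsilon,
\]
and multiplying by $n/|\T|$ and unpacking $\varepsilon$ (absorbing constants exactly as in Theorem~\ref{thm:upper-bound-nuclear}) yields the claimed bound.

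I expect the main obstacle to be the inverse bound in the second paragraph. The conceptual point is that $\L$ is \emph{not} invertible --- it annihilates $\bV$ --- so one must first quotient that direction out, and the sharp statement lives precisely on the centered‑and‑traceless subspace; getting the honest $n$‑dependence (rather than a vacuous constant) requires using \emph{both} defining constraints of that subspace when expanding $\|\L(\bD)\|^2$. The second delicate ingredient is Lemma~\ref{lem:repeated-eig}: it is what lets the traceless part $\bD$ control the full Gram‑matrix difference $\hG-\bG^\ast$, and it is the source of the factor $C_{d,d'}$, which degrades precisely as $n$ approaches $d+d'+1$, i.e., when there is no longer a high‑multiplicity eigenvalue to exploit.
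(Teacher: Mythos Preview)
Your proof is correct and follows the same architecture as the paper's: combine Theorem~\ref{thm:upper-bound-nuclear} with Lemma~\ref{lem:risk-to-M}, pass from $\M$ to $\L$, apply an inverse bound for $\L$ on $\bV^\perp$, recover the full Gram difference via Lemma~\ref{lem:repeated-eig}, and finally descend to $\bK$ using full row rank of $\bX$.

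The one genuine difference is how you establish the key inverse inequality $\|\L(\bD)\|^2\ge n\|\bD\|_F^2$. The paper proves this (its Lemma~\ref{lem-lower-eigenvalue}) by invoking the classical isomorphism $\bbS_c^n\leftrightarrow\bbS_h^n$ between centered Gram matrices and hollow symmetric matrices, then citing Lemma~3 of \cite{JainJamiesonNowak} (minimal eigenvalue of $\Delta$ equals $n$) and transferring back through the isomorphism, losing a bit along the way since $\bV$ is only a projection. You instead compute directly: writing $\bL_t(\bD)=\widetilde\bD_{ij}-\widetilde\bD_{ik}$, summing pairwise differences, and expanding $\widetilde\bD$ under $\bD\1_n=0$ and $\mathrm{tr}(\bD)=0$ yields the exact identity $\|\L(\bD)\|^2=n(n-2)\|\mathrm{diag}(\bD)\|^2+4(n-1)\|\bD\|_F^2$, from which the $n$-lower bound is immediate. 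Your route is more elementary, self-contained, and in fact sharper (you get $4(n-1)$ rather than just $n$); the paper's route has the virtue of making explicit the structural link to the ordinal-embedding setting via the distance-matrix operator $\Delta$.
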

The proof, given in the supplementary materials,  relies on two key components, Lemma \ref{lem:repeated-eig} and a type of \textit{restricted isometry property} for $\M$ on $\bV^\perp$. Our proof technique is a streamlined and more general approach similar to that used in the special case of ordinal embedding. In fact, our new bound improves on the recovery bound given in \cite{JainJamiesonNowak} for ordinal embedding.

We have several remarks about the bound in the theorem. If $\bX$ is well conditioned, e.g. isotropic, then  $\sigma_{\text{min}}(\bX\bX^T) \approx \frac{n}{p}$. In that case  $\frac{n \sigma_{\text{min}}(\bX\bX^T)^2}{|\T|} \approx \frac{1}{p^2}$, so the left hand side is the average squared error of the recovery. In most applications the rank of the empirical risk minimizer $\hK$ is approximately equal to the rank of $\bK^\ast$, i.e. $d\approx d'$. Note that If $d+d'\le\frac{1}{2}(n-1)$ then $C_{d,d'}\le 3$. 
Finally, the assumption that $\bX^T\bK^\ast\bX$ are centered can be guaranteed by centering $\bX$, which has no impact on the triplet differences $\langle \bM_t, \bK^\ast\rangle$, or insisting that $\bK^\ast$ is centered. As mentioned above $C_f$ will be have little effect assuming that our measurements $\langle \bM_t, \bK\rangle $ are bounded.




\subsection{Applications to Ordinal Embedding}

 In the ordinal embedding setting, there are a set of items with unknown locations, $\bz_1, \cdots, \bz_n \in \R^d$ and a set of triplet observations $\S$ where as in the metric learning case observing $y_t=-1$, for a triplet $t=(i,j,k)$ is indicative of the $\|\bz_i-\bz_j\|^2 \leq \|\bz_i-\bz_k\|^2$, i.e. item $i$ is closer to $j$ than $k$. The goal is to recover the $\bz_i$'s, up to rigid motions, by recovering their Gram matrix $\bG^\ast$ from these comparisons. Ordinal embedding case reduces to metric learning through the following observation. Consider the case when $n=p$ and $\bX=\bI_p$, i.e. the $\bx_i$ are standard basis vectors. Letting $\bK^\ast = \bG^\ast$, we see that $\|\bx_i-\bx_j\|^2_{\bK} = \|\bz_i - \bz_j\|^2$. So in particular, $\bL_t = \bM_t$ for each triple $t$, and observations are exactly comparative distance judgements. Our results then apply, and extend previous work on sample complexity in the ordinal embedding setting given in \cite{JainJamiesonNowak}. In particular, though Theorem 5 in  \cite{JainJamiesonNowak} provides a consistency guarantee that the empirical risk minimizer $\widehat \bG$ will converge to $\bG^\ast$, they do not provide a convergence rate. We resolve this issue now.

In their work, it is assumed that $\|\bz_i\|^2 \leq \gamma$ and $\|\bG\|_\ast\leq \sqrt{d}n\gamma$. In particular, sample complexity results of the form $O(dn\gamma\log{n})$ are obtained. However, these results are trivial in the following sense, if $\|\bz_i\|^2 \leq \gamma$ then $\|\bG\|_\ast \leq \gamma n$, and their results (as well as our upper bound) implies that true sample complexity is significantly smaller, namely $O(\gamma n\log{n})$ which is independent of the ambient dimension $d$. As before, assume an explicit link function $f$ with Lipschitz constant $L$, so the samples are noisy observations governed by $\bG^\ast$, and take the loss to be the logarithmic loss associated to $f$. 

We obtain the following improved recovery bound in this case. The proof is immediate from Theorem ~\ref{thm:recovery-bound}. 


\begin{corollary}
Let $\bG^\ast$ be the Gram matrix of $n$ centered points in $d$ dimensions with $\|\bG^\ast\|^2_{F} = \frac{\gamma^2 n^2}{d}$.  Let $\widehat\bG = \min_{\|\bG\|_\ast \leq \gamma n, \|\bG\|_\infty \leq \gamma} R_{\S}(\bG)$ and assume that $\widehat \bG$ is rank $d$, with $n > 2d+1$. Then,
\[\frac{\|\widehat \bG - \bG^\ast\|_F^2}{n^2} = O\left(\frac{LC_{d,d}}{C_f^2}\sqrt{\frac{\gamma  n\log{n}}{|\S|}}\right)\]
\end{corollary}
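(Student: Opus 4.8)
The plan is to obtain the corollary by instantiating Theorem~\ref{thm:recovery-bound} in the ordinal embedding reduction, i.e. by setting $n=p$ and $\bX=\bI_n$ so that $\bK^\ast=\bG^\ast$, $\hK=\hG$, and $\bM_t=\bL_t$ for every triple $t$; recovery of the metric is then literally recovery of the sought Gram matrix. It then remains only to (a) check that the hypotheses of Theorem~\ref{thm:recovery-bound} hold, (b) read off the constants $\lambda$ and $\gamma$, and (c) simplify.

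For (a): with $\bX=\bI_n$ we have $\bX\bX^T=\bI_n$, hence $\bX$ has full rank $p=n$, $\sigma_{\min}(\bX\bX^T)=1$, $\|\bX\bX^T\|=1$, and $\max_i\|\bx_i\|^2=1$. The matrix $\bX^T\bK^\ast\bX=\bG^\ast$ is centered by assumption, and $\bX^T\hK\bX=\hG$ may be taken centered without loss of generality: double centering $\bG\mapsto\bV\bG\bV$ fixes every triplet difference $\bL_t(\bG)$ (hence the empirical risk, since $\bV$ spans $\ker\L$), preserves positive semidefiniteness, and changes $\nuc{\bG}$ and $\|\bG\|_\infty$ only by absolute constants, so it is harmless. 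Since $\bK^\ast$ has rank $d$ and $\hG$ has rank $d'=d$, the requirement $n>d+d'+1$ is the assumed $n>2d+1$ and $C_{d,d'}=C_{d,d}$.

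For (b): the constraint defining $\hG$ gives $\nuc{\hG}\le\gamma n$, while $\bG^\ast$, being positive semidefinite of rank $d$ with $\|\bG^\ast\|_F^2=\gamma^2 n^2/d$, satisfies $\nuc{\bG^\ast}\le\sqrt d\,\|\bG^\ast\|_F=\gamma n$, so we take $\lambda=\gamma n$. The parameter called $\gamma$ in Theorem~\ref{thm:recovery-bound} bounds $\max_t|\langle\bM_t,\bG\rangle|$; here $|\langle\bM_t,\bG\rangle|=|2\bG_{ik}-2\bG_{ij}+\bG_{jj}-\bG_{kk}|\le 6\|\bG\|_\infty\le 6\gamma$, so it may be taken equal to $6\gamma$, the constant being absorbed in the $O(\cdot)$; the Lipschitz constant is that of the log-loss $\ell_f$, namely $L$.

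For (c): substitute into Theorem~\ref{thm:recovery-bound}. On the left, $|\T|=n\binom{n-1}{2}=\Theta(n^3)$ and $\sigma_{\min}(\bX\bX^T)=1$ make the left-hand side $\Theta\!\big(\|\hG-\bG^\ast\|_F^2/n^2\big)$, the per-entry mean squared recovery error. On the right, $\lambda^2\|\bX\bX^T\|/n=\gamma^2 n$ and $\log p=\log n$, so the bracketed expression is dominated by $\sqrt{140\,\gamma^2 n\log n/|\S|}$, with $2\log n/|\S|$ and $\sqrt{2L^2\gamma^2\log(2/\delta)/|\S|}$ of smaller order in $n$ (and the $\delta$-dependence swallowed, the conclusion holding with probability $1-\delta$). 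Rearranging gives $\|\hG-\bG^\ast\|_F^2/n^2=O\!\big(\tfrac{LC_{d,d}}{C_f^2}\sqrt{\gamma n\log n/|\S|}\big)$, which is the stated bound (the $\sqrt\gamma$-versus-$\gamma$ bookkeeping being absorbed under the normalization $\gamma=O(1)$ customary in ordinal embedding). As the paper remarks, there is no substantive obstacle here: the only care required is the passage from the entrywise constraint $\|\bG\|_\infty\le\gamma$ to the $\langle\bM_t,\bG\rangle$-constraint of Theorem~\ref{thm:recovery-bound}, and the reduction to a centered $\hG$, which rests on $\ker\L=\mathrm{span}\{\bV\}$ together with the usual identification of ordinal embeddings up to rigid motion.
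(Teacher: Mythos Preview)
Your proposal is correct and follows exactly the approach the paper indicates: the paper's proof consists of the single sentence ``The proof is immediate from Theorem~\ref{thm:recovery-bound},'' and you have carried out precisely that instantiation ($\bX=\bI_n$, $p=n$, $\lambda=\gamma n$, $\sigma_{\min}(\bX\bX^T)=\|\bX\bX^T\|=1$). The details you supply---the passage from $\|\bG\|_\infty\le\gamma$ to the $\langle\bM_t,\bG\rangle$-constraint, the reduction to centered $\hG$ via $\ker\L=\mathrm{span}\{\bV\}$, and the $\gamma$-versus-$\gamma^2$ bookkeeping under the square root---are all left implicit by the paper and are handled appropriately in your write-up.
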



\section{Experiments}\label{sec:experiments}
To validate our complexity and recovery guarantees, we ran the following simulations. We generate $\bx_1, \cdots, \bx_n \overset{\mbox{\small iid}}{\sim} {\cal N}(\0,\frac{1}{p}\bI)$, with $n=200$, and $\bK^\ast = \frac{p}{\sqrt{d}}\bU\bU^T$ for a random orthogonal matrix $\bU\in \R^{p\times d}$ with unit norm columns.  In Figure ~\ref{fig:exp_1_and_2_a}, $\bK^\ast$ has $d$ nonzero rows/columns. In Figure ~\ref{fig:exp_1_and_2_b}, $\bK^\ast$ is a dense rank-$d$ matrix. We compare the performance of nuclear norm and $\ell_{1,2}$ regularization in each setting against an unconstrained baseline where we only enforce that $\bK$ be psd. Given a fixed number of samples, each method is compared in terms of the relative excess risk, $\frac{R(\hK) - R(\bK^\ast)}{R(\bK^\ast)} $, and the relative squared recovery error, $\frac{\|\hK - \bK^\ast\|_F^2}{\|\bK^\ast\|_F^2}$, averaged over $20$ trials. The y-axes of both plots have been trimmed for readability. 

In the case that $\bK^\ast$ is sparse, $\ell_{1,2}$ regularization outperforms nuclear norm regularization. However, in the case of dense low rank matrices, nuclear norm reularization is superior. Notably, as expected from our upper and lower bounds, the performances of the two approaches seem to be within constant factors of each other. Therefore, unless there is strong reason to believe that the underlying $\bK^\ast$ is sparse, nuclear norm regularization achieves comparable performance with a less restrictive modeling assumption. Furthermore, in the two settings, both the nuclear norm and $\ell_{1,2}$ constrained methods outperform the unconstrained baseline, especially in the case where $\bK^\ast$ is low rank and sparse. 

To empirically validate our sample complexity results,  we compute the number of samples averaged over $20$ runs to achieve a relative excess risk of less than $0.1$ in Figure ~\ref{fig:exp_3_and_4}. First, we fix $p=100$ and increment $d$ from $1$ to $10$. Then we fix $d=10$ and increment $p$ from $10$ to $100$ to clearly show the linear dependence of the sample complexity on $d$ and $p$ as demonstrated in Corollary ~\ref{cor:isotropic-case}. To our knowledge, these are the first results quantifying the sample complexity in terms of the number of features, $p$, and the embedding dimension, $d$.  

\begin{figure}[th]
\centering
\begin{subfigure}{.7\textwidth}
  \centering
  \hspace*{7mm}\includegraphics[width=1.\textwidth]{./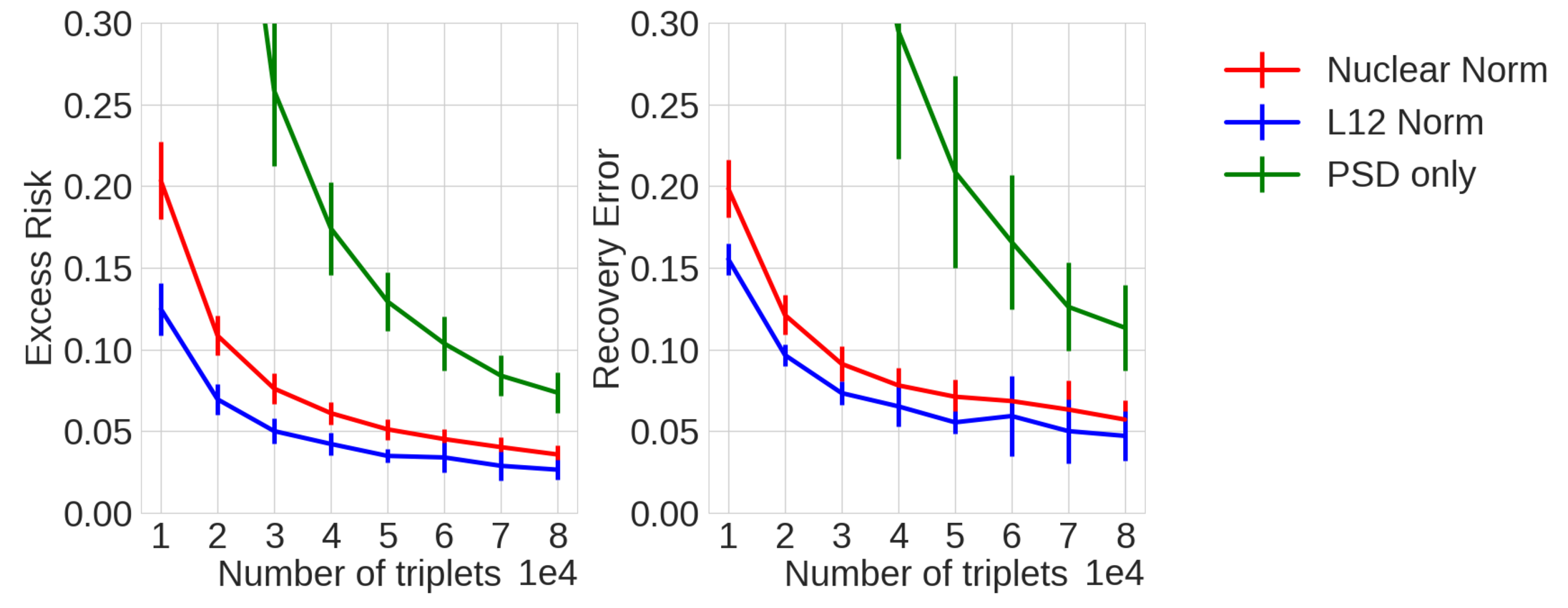}
    \caption{Sparse low rank metric}
  \label{fig:exp_1_and_2_a}
\end{subfigure}%
 \hspace{0cm}
\begin{subfigure}{.7\textwidth}
  \centering
  \hspace*{8mm}\includegraphics[width=1.\textwidth]{./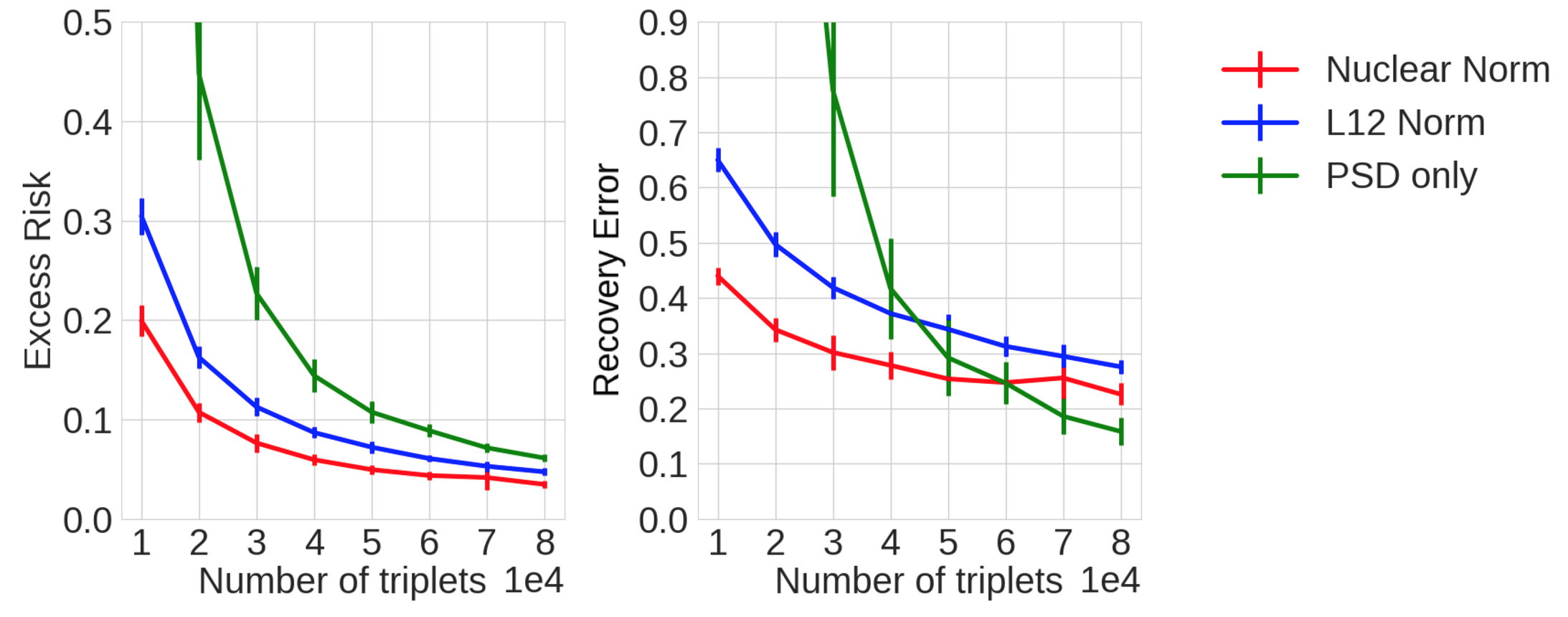}
    \caption{Dense low rank metric}
  \label{fig:exp_1_and_2_b}
\end{subfigure}
\caption{$\ell_{1,2}$ and nuclear norm regularization performance}
\label{fig:exp_1_and_2}
\end{figure}

\begin{figure}[ht]
\centering
\begin{subfigure}{.5\textwidth}
  \centering
  \includegraphics[width=0.75\textwidth]{./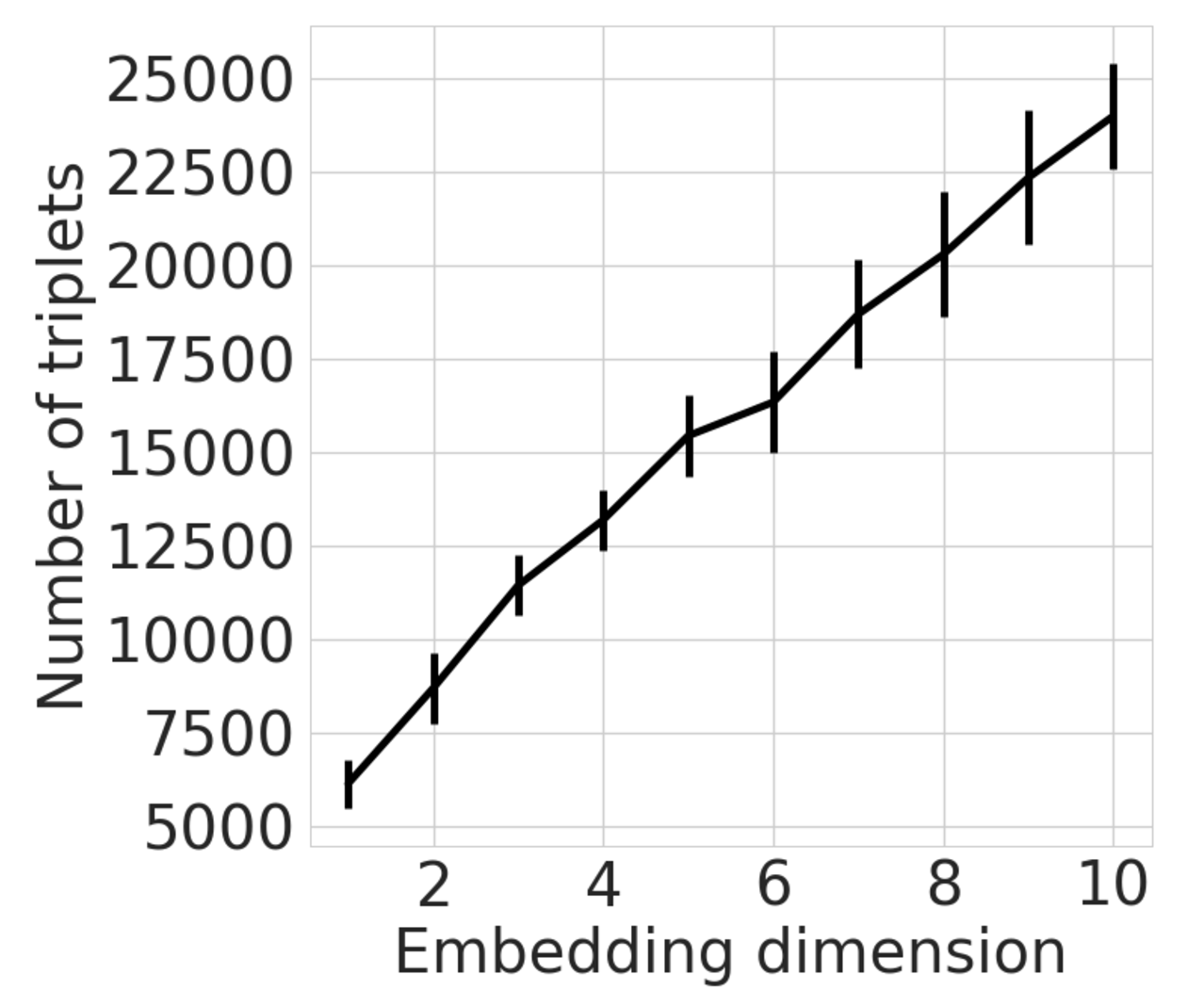}
    \caption{$d$ varying}
\end{subfigure}%
\begin{subfigure}{.5\textwidth}
  \centering
  \includegraphics[width=0.75\textwidth]{./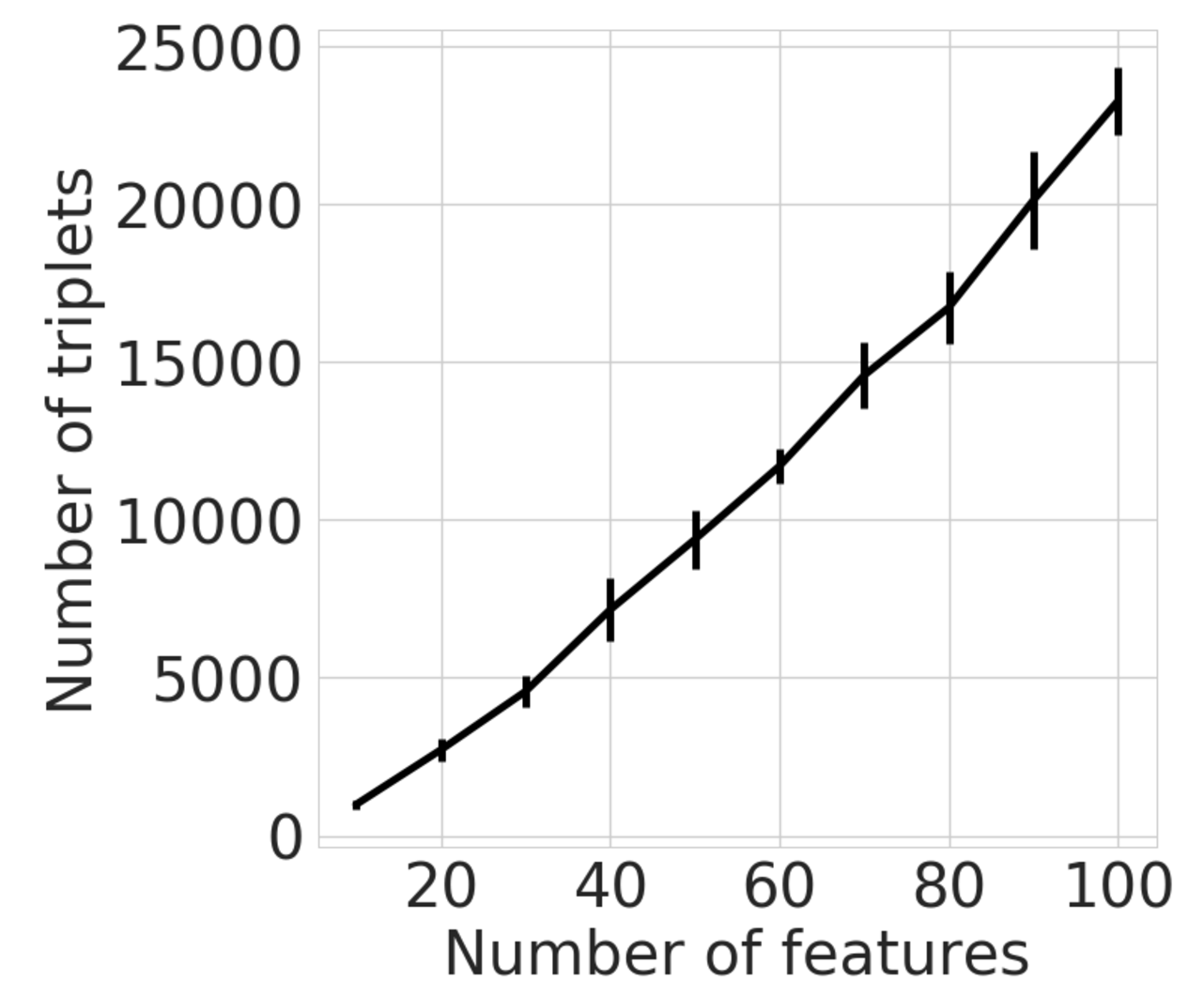}
    \caption{$p$ varying}
\end{subfigure}
\caption{Number of samples to achieve relative excess risk $ < 0.1 $}
\label{fig:exp_3_and_4}
\end{figure}

\textbf{Acknowledgments}
This work was partially supported by the NSF grants CCF-1218189 
and IIS-1623605

\clearpage
\bibliographystyle{unsrt}
\bibliography{nips17_bib}

\newpage
\section{Supplementary Materials}

\section{Proof of Results}
\subsection{Proof of Theorem \ref{thm:upper-bound-nuclear}}
Our argument follows standard statistical learning theory techniques used in the classification literature. This framework is also similar to that used in the one bit matrix completion literature, see \cite{davenport20141}. The main ingredient in the proof is the use of a Matrix Bernstein to bound the Rademacher complexity of our class.

By the Bounded Difference inequality, 
\begin{eqnarray*}
R(\widehat \bK)-R(\bK^\star) & = & \R(\widehat \bK) - \widehat
                                   R(\widehat \bK)  + \widehat
                                   R(\widehat \bK) - \widehat
                                   R( \bK^\star) +\widehat
                                   R(\bK^\star)-R(\bK^\star) \\
 & \leq & 2 \sup_{\bK \in \K_{\lambda,\gamma}} |\widehat R(\bK)-R(\bK)| \
         \\ & \leq & 2 \E[\sup_{\bK \in \K_{\lambda,\gamma}} |\widehat
         R(\bK)-R(\bK)| ] \ + \ \sqrt{\frac{2\beta^2
         \log2/\delta}{|\S|} } \ , 
\end{eqnarray*} 
where $\beta = \sup_{\bK\in \K_{\lambda,\gamma}} |\ell((y_t \langle \bM_t,\bK\rangle) - \ell((y_{t'} \langle \bM_{t'},\bK\rangle)|\leq L\gamma$ since $\langle\bM_t,\bK\rangle \leq \gamma$. 
Using standard symmetrization and contraction lemmas, we can introduce Rademacher random variables
$\varepsilon_t \in  \{-1,1\}$ for all $t\in \T$ so that
\begin{align*}
\E\left[\sup_{\bK \in \K_{\lambda,\gamma}} |\widehat R(\bK)-R(\bK)| \right]  &\leq \E \, \frac{2L}{|\T|} \sup_{\bK \in \K_{\lambda,\gamma}}\left|\sum_{t\in\S} \varepsilon_t \langle \bM_t,\bK\rangle \right|\\
&\leq \E \frac{2L}{|\S|} \sup_{\bK \in \K_{\lambda,\gamma}}\|\sum_{t\in\S} \varepsilon_t \bM_t\| \|\bK \|_\ast\\
&\leq \E \frac{2L\lambda}{|\S|} \sup_{\bK \in \K_{\lambda,\gamma}}\|\sum_{t\in\S} \varepsilon_t \bM_t\|
\end{align*}

We employ a matrix Bernstein bound, Theorem 6.6.1 in \cite{tropp2015introduction}, to compute 
\[\E\|\sum_{t\in\S} \varepsilon_t \bM_t\| \leq \sqrt{140\frac{\|\bX\bX^T\|}{n}\maxx |\S|\log{p}}+2\maxx \log{p}.\] To see this, it suffices to bound $\left\|\sum_{t\in \T} \bM_t^2\right\|$ which is done in Lemma \ref{lem-second-moment}. Plugging this in above gives
\[
\E \frac{2L\lambda}{|\S|} \left\|\sum_{t\in\S} \varepsilon_t \bM_t\right\| \leq 2L\left(\sqrt{\frac{140\lambda^2\frac{\|\bX\bX^T\|}{n}\maxx \log{p}}{|\S|}}+\frac{2\maxx \log{p}}{|\S|}\right)
\]

\begin{lemma}\label{lem-second-moment}
\[\frac{1}{n\binom{n-1}{2}}\left\|\sum_{t\in \T} \bM_t^2\right\| \leq 70\frac{\|\bX\bX^T\|}{n}\maxx\]
\end{lemma}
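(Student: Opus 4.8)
\textbf{Proof plan for Lemma \ref{lem-second-moment}.} Two structural facts drive everything. First, each $\bM_t$ is symmetric, so $\bM_t^2\succeq0$ and hence $\sum_{t\in\T}\bM_t^2\succeq0$; its operator norm is therefore its largest eigenvalue, and it suffices to dominate $\sum_{t\in\T}\bM_t^2$ in the Loewner order by a multiple of $\bX\bX^T$. Second, writing $\bu_{ij}:=\bx_i-\bx_j$ and expanding $d_{\bK}(\bx_i,\bx_j)-d_{\bK}(\bx_i,\bx_k)$ shows that for $t=(i,j,k)$ we have the rank-two representation $\bM_t=\bu_{ij}\bu_{ij}^T-\bu_{ik}\bu_{ik}^T$ (a direct check against the stated formula for $\bM_t$). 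The plan is to bound $\bM_t^2$ by rank-one PSD pieces, sum them over $\T$ by a counting argument, and read off the constant.

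First I would reduce $\bM_t^2$ to rank-one terms. For any symmetric $\bA,\bB$ one has $2\bA^2+2\bB^2-(\bA-\bB)^2=(\bA+\bB)^2\succeq0$; applying this with $\bA=\bu_{ij}\bu_{ij}^T$, $\bB=\bu_{ik}\bu_{ik}^T$ (which sidesteps the indefinite cross term) gives $\bM_t^2\preceq 2\|\bu_{ij}\|^2\bu_{ij}\bu_{ij}^T+2\|\bu_{ik}\|^2\bu_{ik}\bu_{ik}^T$. The assumption $\max_i\|\bx_i\|^2=1$ gives $\|\bu_{ij}\|^2\le 2\|\bx_i\|^2+2\|\bx_j\|^2\le4$, hence $\bM_t^2\preceq 8\bigl(\bu_{ij}\bu_{ij}^T+\bu_{ik}\bu_{ik}^T\bigr)$.

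Next I would do the bookkeeping for $\sum_{t\in\T}\bigl(\bu_{ij}\bu_{ij}^T+\bu_{ik}\bu_{ik}^T\bigr)$. For a fixed ordered pair $(i,m)$ with $i\ne m$, the rank-one matrix $\bu_{im}\bu_{im}^T$ occurs as an ``anchor edge'' of exactly $n-2$ triples of $\T$ (pick the remaining index from $[n]\setminus\{i,m\}$; the constraint $j<k$ only fixes whether $m$ plays the role of $j$ or $k$), so the sum equals $(n-2)\sum_{i,m}(\bx_i-\bx_m)(\bx_i-\bx_m)^T$. Expanding the double sum yields $2n\bX\bX^T-2(\bX\1_n)(\bX\1_n)^T\preceq 2n\bX\bX^T$. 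Chaining the bounds, $\sum_{t\in\T}\bM_t^2\preceq 16\,n(n-2)\,\bX\bX^T$, and dividing by $n\binom{n-1}{2}=n(n-1)(n-2)/2$ gives $\frac{1}{n\binom{n-1}{2}}\bigl\|\sum_{t\in\T}\bM_t^2\bigr\|\le\frac{32}{n-1}\|\bX\bX^T\|\le 70\,\frac{\|\bX\bX^T\|}{n}$, where the last inequality holds for $n\ge2$.

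I do not expect a real obstacle: the content is a PSD-order estimate plus a counting identity, and the only points needing care are choosing the clean dominance $(\bA-\bB)^2\preceq2\bA^2+2\bB^2$ (rather than trying to control $\bu_{ij}^T\bu_{ik}\,(\bu_{ij}\bu_{ik}^T+\bu_{ik}\bu_{ij}^T)$ head-on) and getting the multiplicity $n-2$ right. As a cross-check one can instead bound $\sup_{\|\bu\|=1}\sum_{t\in\T}\|\bM_t\bu\|^2$ by writing $\bM_t\bu=(a_k-a_j)\bx_i+(a_i-a_k)\bx_k+(a_j-a_i)\bx_j$ with $a_m=\bx_m^T\bu$ and using $\|v_1+v_2+v_3\|^2\le3\sum_m\|v_m\|^2$; this gives the same bound (with an even smaller constant), but the Loewner-order argument above is the shorter one to write out.
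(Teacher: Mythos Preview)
Your argument is correct. The rank-two identity $\bM_t=\bu_{ij}\bu_{ij}^T-\bu_{ik}\bu_{ik}^T$ checks out against the paper's formula for $\bM_t$, the Loewner bound $(\bA-\bB)^2\preceq 2\bA^2+2\bB^2$ is valid for symmetric $\bA,\bB$, the multiplicity count $n-2$ is right, and the final numerics $\frac{32}{n-1}\le\frac{70}{n}$ hold for all $n\ge 2$.

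Your route is genuinely different from the paper's. The paper factors $\bM_t=\bX\bL_t\bX^T$ (with $\bL_t$ the analogous operator built from standard basis vectors), then computes the entries of $\sum_{t\in\T}\bL_t\bX^T\bX\bL_t$ explicitly---a long calculation they omit---and bounds its operator norm by $7n^2\max_j\|\bx_j\|^2$ via Gershgorin's circle theorem, finally pulling the outer $\bX(\cdot)\bX^T$ through as a factor of $\|\bX\bX^T\|$. Your argument bypasses both the entrywise computation and Gershgorin entirely: the Loewner-order inequality plus the closed-form identity $\sum_{i,m}(\bx_i-\bx_m)(\bx_i-\bx_m)^T=2n\bX\bX^T-2(\bX\1_n)(\bX\1_n)^T$ delivers domination by $\bX\bX^T$ directly. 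What you gain is brevity and transparency; what the paper's route gains is a slightly smaller leading constant (their bound behaves like $14/n$ for large $n$ versus your $32/n$), and their $\bL_t$ factorization is reused elsewhere in the paper. Both approaches use the normalization $\max_i\|\bx_i\|^2=1$ from Theorem~\ref{thm:upper-bound-nuclear}, so neither is more general in that respect.
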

\begin{proof}
 Let $\be_i$ be the $i^{th}$ standard basis vector. For a triplet $t=(i,j,k)$, define
\[\bL_t = \be_{i}\be_{k}^T + \be_{k}\be_{i}^T -\be_{i}\be_{j}^T - \be_{j}\be_{i}^T + \be_{j}\be_{j}^T -\be_{k}\be_{k}^T \]

(in particular $\bL_t$ is the matrix corresponding to the operator $\bL_t$ given in Section 2.3). A computation shows that $\langle\bL_t, \bX^T\bK\bX \rangle = \langle \bM_t, \bK \rangle$ and moreover $\bM_t = \bX \bL_t \bX^T$.
By definition,

\begin{align*}
\sum_{t\in \T} \bM_t^2 &= \sum_{t\in \T} \bX \bL_t \bX^T \bX \bL_tX\\
&= \bX\left(\sum_{t\in \T} \bL_t\bX^T\bX\bL_t\right ) \bX^T
\end{align*}


We now focus our attention on simplifying the middle term. Firstly, note that we can assume that the $\bX$'s are centered, i.e. $\bX \1_n = \0$. To see this, note that the $\bL_t$'s are centered so in particular, $\bL_t\bV = \bL_t$. Then
\[\bL_t\bX^T\bX\bL_t = \bL_t\bV^T\bX^T\bX\bV\bL_t = \bL_t(\bX\bV)^T(\bX\bV)\bL_T\]
so we can replace $\bX$ with $\bX\bV$, i.e. we can center $\bX$. Also note that note that centering $\bX$ only diminishes the operator norm $\bX\bX^T$, so centering does not affect the statement of the bound, and furthermore a tighter statement is certainly possible by assuming that $\bX$ is centered.

Using the reduction to a centered $\bX$, a computation (omitted due to length) shows that 
\[\left(\sum_{t\in T} \bL_t \bX^T\bX \bL_t\right)_{i,j}  =
\begin{cases}
(2n-3)\|\bX^T\bX\|_\ast+(n^2-3n)\|\bx_i\|^2 & i=j\\
(n-4)\langle \bx_i, \bx_j\rangle-(n-2)\|\bx_j\|^2-(n-2)\|\bx_i\|^2-\|\bX^T\bX\|_\ast & i\neq j
\end{cases}\]

To bound $\|\sum_{t\in \T} \bL_t \bX^T\bX \bL_t\|\leq 7n^2\maxx$, by Gershgorin's Circle Theorem we just have to bound the sums of the absolute values of the entries in each row. This ends up being, 
\begin{align*}
&(2n-3 + n-1)\|\bX^T\bX\|_\ast + (n^2-3n+(n-1)(n-2))\|\bx_i\|^2  + (n-2) \sum_{i\neq j}\|\bx_j\|^2 \\ && \hspace{-3cm}+ (n-4)\sum_{i\neq j} |\langle \bx_i, \bx_j \rangle| \\
&\leq(2n-3+n-1+n-2) \|\bX^T\bX\|_\ast + (n^2-3n+(n-1)(n-2)-2)\|\bx_i\|^2 \\ && \hspace{-3cm} + (n-4)\sum_{j} \|\bx_i\| \|\bx_j\| \\
&\leq (4n-6) \|\bX^T\bX\|_\ast + (2n^2-6n)\|\bx_i\|^2 +n(n-4)\max_{j} \|\bx_j\|^2 \\
&\leq (4n-6)n\max_{j} \|\bx_j\|^2+(2n^2-6n)\max_{j} \|\bx_j\|^2+(n^2-4n)\max_{j} \|\bx_j\|^2\\
&\leq 7n^2\max_{j}\|\bx_j\|^2
\end{align*}

So $\|\sum_{t\in \T} \bL_t \bX^T\bX \bL_t\|\leq 7n^2\maxx$ and 
\begin{align*}
\frac{1}{n\binom{n-1}{2}}\left\|\sum_{t\in \T} \bX \bL_t \bX^T \bX \bL_t\bX\right\| \leq 70\frac{\|\bX\bX^T\|}{n}\maxx 
\end{align*}
using the fact that $\frac{2n^2}{(n-1)(n-2)}\leq 10$ for positive $n\geq 3$.
\end{proof}

\subsection{Proof of Theorem \ref{thm:minimax-lower-bound}}

We will need the following lemma relating the KL-divergence to squared distance in this section and in the proof of Theorem \ref{thm:recovery-bound}. 

\begin{lemma}\label{lem:kl-divergence}
Let $y, z\in (0,1)$, then
\[2(z-y)^2 \leq KL(z||y) \leq  \frac{(z-y)^2/2}{\inf_{x\in (0,1)} x(1-x)}\]
\end{lemma}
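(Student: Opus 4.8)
The plan is to view $KL(z\|y)$ as a function of $z$ with $y$ held fixed, and to exploit the fact that both its value and its first derivative vanish at $z=y$, while its second derivative has the clean closed form $1/(z(1-z))$. Concretely, set $\phi(z) := KL(z\|y) = z\ln(z/y) + (1-z)\ln((1-z)/(1-y))$, so that $\phi(y)=0$. Differentiating gives $\phi'(z) = \ln\!\frac{z(1-y)}{y(1-z)}$, hence $\phi'(y)=0$, and $\phi''(z) = \frac1z + \frac1{1-z} = \frac{1}{z(1-z)}$.

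For the upper bound I would apply Taylor's theorem with Lagrange remainder to $\phi$ about $z=y$: since $\phi(y)=\phi'(y)=0$, there is some $\xi$ strictly between $y$ and $z$ with $\phi(z) = \frac12\phi''(\xi)(z-y)^2 = \frac{(z-y)^2}{2\,\xi(1-\xi)}$. As $\xi\in(0,1)$, bounding $\xi(1-\xi)\ge \inf_{x\in(0,1)}x(1-x)$ in the denominator yields the claimed inequality. (In the applications of the lemma, $y$ and $z$ are both of the form $f(u)$ with $|u|\le\gamma$, so $\xi$ lies in the interval spanned by those two values and the infimum may be restricted to that smaller set; this is precisely what produces the factor $\inf_{|x|\le\gamma}f(x)(1-f(x))$ appearing in the constant of Theorem~\ref{thm:minimax-lower-bound}.)

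For the lower bound — which is just Pinsker's inequality specialized to Bernoulli distributions — I would set $g(z) := \phi(z) - 2(z-y)^2$. Then $g(y)=0$ and $g'(y)=\phi'(y)=0$, while $g''(z) = \frac{1}{z(1-z)} - 4 \ge 0$ on $(0,1)$ because $z(1-z)\le \frac14$ there. Hence $g$ is convex with a stationary point at $z=y$, so $z=y$ is its global minimizer and $g(z)\ge g(y)=0$, i.e. $KL(z\|y)\ge 2(z-y)^2$.

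The whole argument is elementary calculus; the only points requiring care are verifying that $\phi'(y)=0$ (so the Taylor remainder is genuinely second order, which is what gives the quadratic $(z-y)^2$ rather than a linear term) and checking that the sign condition $g''\ge 0$ holds on all of $(0,1)$ rather than merely near $y$. I do not anticipate any real obstacle.
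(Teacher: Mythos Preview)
Your proof is correct and essentially identical to the paper's: both compute $\phi''(z)=1/(z(1-z))$ and invoke the second-order Taylor remainder $\phi(z)=\tfrac12\phi''(\eta)(z-y)^2$ with $\eta$ between $y$ and $z$. The only cosmetic difference is that the paper reads off the lower bound directly from $\eta(1-\eta)\le 1/4$ in that remainder, whereas you repackage the same inequality $\phi''\ge 4$ as convexity of $\phi(z)-2(z-y)^2$.
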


\begin{proof}
For $y,z\in (0,1)$ let $g(z) =  z\log{\frac{z}{y}}+(1-z)\log{\frac{1-z}{1-y}}$. Then $g'(z) = \log{\frac{z}{1-z}}-\log{\frac{y}{1-y}}$ and $g''(z) = \frac{1}{z(1-z)}.$ By Taylor's theorem, for some $\eta$ in the interval between $y$ and $z$, $g(z) = \frac{g''(\eta)}{2}(z-y)^2.$
So for a lower bound,
$$g(z) \geq \frac{(z-y)^2/2}{\sup_{x\in (0,1)} x(1-x)}\geq 2(z-y)^2.$$ 

Similarly an upper bound is given by,
$$g(z) \leq \frac{(z-y)^2/2}{\inf_{x\in (0,1)} x(1-x)}$$ 

\end{proof}

Now we resume the proof of Theorem 2.3. Fix $\bX = \bI.$ Given triplet comparisons generated according to $\bK$, we are interested in finding the minimax lower bound,
\[\inf_{\hK}\sup_{\bK\in \K'_{\lambda, \gamma}} \E[R(\hK)] - R(\bK)\] 
Where as previously computed in Section ~\ref{sec:recovery}
\[R(\hK) - R(\bK) = \frac{1}{|\T|}\sum_{t\in\T} f(\langle \bM_t, \bK\rangle)\log{\frac{f(\langle \bM_t, \bK\rangle)}{f(\langle \bM_t, \hK\rangle)}}+(1-f(\langle \bM_t, \bK\rangle))\log{\frac{1-f(\langle \bM_t, \bK\rangle)}{1-f(\langle \bM_t, \hK\rangle)}}\]

Lemma ~\ref{lem:risk-to-M} implies,  
\[R(\hK) - R(\bK)\geq \frac{2C_f^2}{|\T|}\|\M(\bK)-\M(\hK)\|_2^2.\]
where $C_f = \inf_{|x|\leq \gamma} f'(x)$. We will construct a set $\kappa\subset \K'_{\lambda, \gamma}$ so that for any two $\bK^1, \bK^2\in \kappa$, with $K_1\not{=} K_2$, 
\begin{itemize}
\item $\frac{2C_f^2}{|\T|}\|\M(\bK^1)-\M(\bK^2)\|^2_F \geq 4s_n^2$, for $\bK^1 \neq \bK^2$

\item Let $P^\S_{\bK}$ denote the sample distribution of a set of $|\S|$ samples conditioned on it being drawn from $\bK \in \kappa$. Then we also require $KL(P^\S_{\bK^1}||P^\S_{\bK^2}) \leq \frac{1}{16}\ln|\kappa|$
\end{itemize}
 Following an argument similar to the proof of Theorem 2 in \cite{abramovich2016model}, it will then follow from a variant of Fano's inequality, namely Lemma A.1 from \cite{bunea2007aggregation},  that
\[\inf_{\hK}\sup_{\bK\in \K'_{\lambda, \gamma}} \E[R(\hK)] - R(\bK) \geq s_n^2.\]
 By Lemma 8.3 of \cite{rigollet2011exponential}, there exists a subset $\kappa \subset \K_{\lambda, \gamma}' $, and an absolute constant $0 < C_1 < 1$ such that 
\begin{itemize}
\item $\ln{|\kappa|} \geq C_1d\ln{\frac{p}{d}}$ 
\item Each element of $\kappa$ has sparsity $d$, is 0 away from the diagonal, and on the diagonal the elements are either 0 or $\gamma$, for a value of $\gamma \geq 0$ we will choose later. 
\item For all $\bK^i, \bK^j\in \kappa$, $\|\bK^i-\bK^j\|_0 \geq C_1d$.
\end{itemize}

Therefore, for $\bK^1, \bK^2 \in \kappa$, we need only to show $KL(\bK^1||\bK^2) \leq \frac{1}{16}\ln|\kappa|.$ Using the fact that $\bX = \bI $, 
 \begin{align*}
    \frac{2C_f^2}{|\T|}\|\M(\bK^1) - \M(\bK^2)\|^2_2 &\geq \frac{2C_f^2}{|\T|} p\sum_{j<k} ((\bK^1_{kk}-\bK^2_{kk})-(\bK^1_{jj}-\bK^2_{jj}) )^2\\
    &\geq \frac{2C_f^2C_1 pd(p-2d)\gamma^2}{|\T|}\\
\end{align*}
To see the second to last inequality, note that there are at least $C_1d(p-2d)$ pairs of indices $j,k$ where $\bK^1_{kk}\neq \bK^2_{kk}$ but $\bK^1_{jj} =  \bK^2_{jj}$, because $\bK^1$ and $\bK^2$ share at least  $p-2d$ entries on their diagonal that are both 0. Each such entry contributes a $\gamma^2$ to the sum.

In particular choose, 
\[s_n^2 = \frac{C_f^2C_1 pd(p-2d)\gamma^2}{2|\T|}.\]


We proceed by selecting $\gamma$ such that $KL(P^\S_{\bK^1}||P^\S_{\bK^2}) \leq \frac{1}{16}\ln|\kappa|$. Assume our samples are $\S = \{(t,y_t)\}$. Then since the samples are i.i.d.
\[KL(P^\S_{\bK^1}||P^\S_{\bK^2}) = \sum_{t\in \S} KL(P_{\bK^1}(t)|| P_{\bK^2}(t))\]
where $P_{\bK^i}(t)$ is the distribution of $y_t$ conditioned on $\bK^i$, in particular the probability of $y_t = -1$ is $f(\langle \bM_t,\bK^i\rangle)$.

We can bound each term of the sum above using the upper bound from Lemma \ref{lem:kl-divergence}. 
\begin{align*}
  KL(P_{\bK^1}(t)|| P_{\bK^2}(t)) &\leq \frac{(f(\langle M_t,\bK^1\rangle) - f(\langle M_t,\bK^2\rangle))^2}{2 \inf_{t} f(\langle M_t,\bK^2\rangle)(1-f(\langle M_t,\bK^2\rangle))}\\
  &\leq \frac{(\langle M_t, \bK^1- \bK^2\rangle)^2\sup_{|\nu|\leq \gamma}f'(\nu)^2}{2 \inf_{|x|\leq \gamma} f(x)(1-f(x))} \\
  &\leq \frac{\gamma^2\sup_{|\nu|\leq \gamma}f'(\nu)^2}{2\inf_{|x|\leq \gamma} f(x)(1-f(x))} 
\end{align*}

 Summing over $t\in \S$, we require that \[KL(P^\S_{\bK^1}||P^\S_{\bK^2})\leq \frac{\gamma^2 |\S|\sup_{|\nu|\leq \gamma}f'(\nu)^2}{2\inf_{|x|\leq \gamma} f(x)(1-f(x))} \leq \frac{C_1}{16}d\ln{\frac{p}{d}} \leq \frac{1}{16}\ln{|\kappa|},\] so in particular, we will take 
\begin{equation*}
    \frac{\gamma^2\sup_{|\nu|\leq \gamma}f'(\nu)^2}{2\inf_{|x|\leq \gamma} f(x)(1-f(x))} =  \frac{C_1}{16|\S|} d\ln{\frac{p}{d}}
\end{equation*} 


From this point on, let's take $\lambda = p$, and $d = \frac{p}{4}$. Now we have a few additional constraints on $\gamma$,  
\begin{itemize}
\item Since $\|\bK^i\|_{1,2}\leq \lambda$ for each $\bK^i\in \kappa$ , we require $\gamma d\leq \lambda$, so in particular $\gamma \leq 4$. 
\item In addition, we are going to require $\gamma \geq 1$ since we will need $p\gamma \geq \lambda$ (used below).
\end{itemize}

Based on these conditions, we just take $\gamma =2$ and after simplification choose, 
\[|S| := \frac{C_1 p\ln{4}  \inf_{|x|\leq \gamma} f(x)(1-f(x))}{32 \gamma^2 \sup_{|\nu|\leq \gamma}f'(\nu)^2} \]




Now we are finally in a position to use our choice of $\gamma, d, 
\lambda$ and $|\S|$.  We see that
\begin{align*}
    s_n^2 = \frac{C_f^2C_1 pd(p-2d)\gamma^2}{2|\T|} \tag{since $p\gamma\geq \lambda$}
    & = \frac{C_f^2C_1 p^2\gamma\lambda}{16|\T|} \\
    & \geq \frac{C_f^2C_1\gamma\lambda}{8 p} \\
    & \geq \frac{C_f^2 \sqrt{\inf_{|x|\leq \gamma} f(x)(1-f(x))}}{8p\sqrt{\sup_{|\nu|\leq 2}f'(\nu)^2}} \sqrt{\frac{C_1^3\ln{4}}{32}\frac{p}{|\S|}}\\
    & = \frac{C_f^2}{32}\sqrt{\frac{\inf_{|x|\leq \gamma} f(x)(1-f(x))}{\sup_{|\nu|\leq \gamma}f'(\nu)^2}} \sqrt{\frac{C_1^3\ln{4}}{2}\frac{\lambda^2 \frac{\|\bX\bX^T\|}{n}}{|\S|}}
\end{align*}
where the final equality follows from the fact that we have chosen $\bX = \bI_n$ so $n=p$. \hfill\ensuremath{\square}


\subsection{Proof of Lemma \ref{lem:risk-to-M}}

\begin{proof}[Proof of Lemma \ref{lem:risk-to-M}]
As computed prior to the statement of Theorem \ref{thm:recovery-bound}. 
\begin{align*}
    R(\hK)-R(\bK^\ast) 
    &= \frac{1}{|\T|}\sum_{t\in\T} KL(f(\langle \bM_t, \bK^\ast\rangle)|| f(\langle \bM_t, \hK\rangle))
\end{align*}
Now using Lemma \ref{lem:kl-divergence} with $z = f(\langle \bM_t, \bK^\ast\rangle)$ and $y = f(\langle \bM_t, \hK\rangle)$ we see
\begin{align*}
KL(f(\langle \bM_t, \bK^\ast\rangle)|| f(\langle \bM_t, \hK\rangle))
&\geq 2C_f^2 (\langle \bM_t, \bK^\ast\rangle-\langle \bM_t, \hK\rangle)^2
\end{align*} 
Summing over all $t\in \T$ 
\begin{align*}
R(\hK)-R(\bK^\star) 
&\geq \frac{2C_f^2}{|\T|}\sum_{t\in T} (\langle \bM_t, \bK^\ast\rangle-\langle \bM_t, \hK\rangle)^2\\
&= \frac{2C_f^2}{|\T|} \sum_{t\in T} (\langle \bM_t,  \hK - \bK^\star \rangle)^2= \frac{2C_f^2}{|\T|} \| \M( \hK) - \M(\bK^{\ast}) \|^2_2.
\end{align*}
\end{proof}

\subsection{Proof of Theorem 2.7}
Before launching into the proof of Theorem \ref{thm:recovery-bound}, we first prove an auxiliary set of results that depend on the classical correspondence between centered Gram matrices and Euclidean distance matrices. 
For a more in depth discussion of this correspondence, we refer interested readers to \cite{Dattorro}. Let $\bbS^n_h$ be the subspace of symmetric hollow matrices, i.e. symmetric matrices with zero diagonal, and let $\bbS^n_c$ be the subspace of centered Gram matrices, i.e. positive semi-definite matrices with $\1_n$ in their kernel.  

Note that $\dim \bbS^n_h = \dim \bbS^n_c = \binom{n}{2}$. In fact these spaces are isomorphic with an explicit linear isomorphism given by the maps
\[\bbS^n_h \rightarrow \bbS^n_c: \bD\rightarrow -\frac{1}{2}\bV\bD\bV\]
with inverse
\[\bbS^n_c \rightarrow \bbS^n_h: \bG \rightarrow \diag(\bG)\1_n^T-2\bG+\1_n\diag(\bG)^T\]
where again, $\bV = \bI_n - \frac{1}{n}\1_n\1_n^T$.

Given a set of centered points $\bX\in \R^p$, then under the isomorphism above, the associated Gram matrix $\bG\in \bbS_c^n$ maps to the squared distance matrix $\bD\in \bbS_h^n$. In particular, a matrix in $\bbS^n_h$ is a valid Euclidean distance matrix if and only if $-\frac{1}{2}\bV\bD\bV$ is a centered Gram matrix. 

Given a triplet $t=(i,j,k)\in \T$, we can define an operator $\Delta_t(\bD) := \bD_{ij}-\bD_{ik}$ and 
\[\Delta(\bD) := (\Delta_t(\bD)|\text{ for } t\in \T)\]
analogous to $\L$ and $\M$. In particular, for associated $\bD$ and $\bG$, $\Delta_t(\bD) = \L_t(\bG)$ for all $t$ so $\
\Delta(\bD) = \L(\bG)$.   We can now prove the key lemmas used in the proof of \ref{thm:recovery-bound}.

\begin{lemma}
The null space of $\L$ is one dimensional, spanned by $\bV$. 
\end{lemma}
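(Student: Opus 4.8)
The plan is to read the null space off from the factorization $\L=\Delta\circ\iota$ introduced just above, where $\iota\colon\bbS^n_c\to\bbS^n_h$ is the linear isomorphism $\bG\mapsto\diag(\bG)\1_n^T-2\bG+\1_n\diag(\bG)^T$ that sends a centered (Gram) matrix to its Euclidean distance matrix $\bD$, and $\L(\bG)=\Delta(\bD)$. Since $\iota$ is a bijection onto $\bbS^n_h$ with inverse $\bD\mapsto-\frac12\bV\bD\bV$, it suffices to (i) determine $\ker(\Delta|_{\bbS^n_h})$ and then (ii) transport it back through $\iota^{-1}$.

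For step (i): if $\Delta(\bD)=\0$ then $\bD_{ij}=\bD_{ik}$ for every triple $(i,j,k)\in\T$. Fixing the anchor index $i$ and letting $j,k$ range over the remaining $n-1\ge 2$ indices shows that $\bD_{ij}$ is independent of $j\neq i$; call this common value $r_i$. Symmetry of $\bD$ forces $r_i=\bD_{ij}=\bD_{ji}=r_j$ whenever $i\neq j$, so all the $r_i$ coincide, say $r_i\equiv r$, and since $\bD$ is hollow this gives $\bD=r(\1_n\1_n^T-\bI_n)$. Conversely every such matrix lies in $\ker\Delta$, because $\Delta$ only compares distinct off-diagonal entries, all of which equal $r$. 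Hence $\ker(\Delta|_{\bbS^n_h})$ is the line through $\1_n\1_n^T-\bI_n$.

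For step (ii): using $\bV\1_n=\0$ and $\bV^2=\bV$,
\[
-\frac{1}{2}\,\bV\bigl(\1_n\1_n^T-\bI_n\bigr)\bV \;=\; -\frac{1}{2}\bigl(\bV\1_n\1_n^T\bV-\bV^2\bigr)\;=\;\frac{1}{2}\bV ,
\]
so $\ker\L$ is the line through $\bV$, which indeed lies in $\bbS^n_c$ (it is the orthogonal projector onto $\1_n^\perp$). This proves the lemma.

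The only real subtlety — and hence the step I would be most careful about — is the domain: the statement concerns $\L$ acting on \emph{centered} symmetric matrices (equivalently on $\bbS^n_c$), which is exactly the relevant class since every Gram matrix $\bX^T\bK\bX$ built from centered data is centered. Dropping the centering assumption, $\L(\bG)=\0$ only forces $\bG_{jj}-2\bG_{ij}$ to be independent of $j$ for each $i$, and combining this with symmetry leaves the $(n+1)$-dimensional family spanned by $\bV$ together with all matrices $\1_n\bu^{T}+\bu\1_n^{T}$, $\bu\in\R^n$; imposing $\bG\1_n=\0$ is precisely what removes the extra $n$-dimensional piece. If one prefers to bypass the distance-matrix isomorphism, this last observation can be turned directly into a proof: $\L(\bG)=\0$ together with $\bG\1_n=\0$ pins $\bG$ down to a scalar multiple of $\bV$ by a short elementary computation.
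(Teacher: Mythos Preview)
Your proof is correct and follows essentially the same route as the paper: factor $\L$ through the Gram--to--distance isomorphism, identify $\ker\Delta$ on $\bbS^n_h$ as the line through $\1_n\1_n^T-\bI_n$, then pull back via $-\tfrac12\bV(\cdot)\bV$ to get the line through $\bV$. The paper simply cites Lemma~2 of \cite{JainJamiesonNowak} for the first step, whereas you supply the short direct argument; your additional remark that the one-dimensionality only holds on the centered subspace $\bbS^n_c$ (and that on all symmetric matrices the kernel enlarges by the $n$-dimensional family $\1_n\bu^T+\bu\1_n^T$) is a useful clarification the paper leaves implicit.
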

\begin{proof}
Lemma 2 in \cite{JainJamiesonNowak} shows $\ker \Delta$ is one dimensional and is spanned by $\bJ = \1_n\1_n^T - \bI_n$. A computation shows that $-\frac{1}{2}\bV\bJ\bV =  \frac{1}{2}\bV$. Since $\L(\bV) = \Delta(\bJ) = \0$, $\bV$ spans $\ker \L$. 
\end{proof}

We rely on an analogous statement for distance matrices given in Lemma 3 in \cite{JainJamiesonNowak}. 
\begin{lemma}
\label{lem-lower-eigenvalue}
Let $\bG\in \bbS_c^n$ and $\bH$ the component of $\bG$ orthogonal $\bV$ then $\|\L(\bH)\|^2\geq n\|\bH\|_F^2$.
\end{lemma}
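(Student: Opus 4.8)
The plan is to transport the problem from centered Gram matrices to Euclidean distance matrices via the isomorphism $\bbS^n_c \to \bbS^n_h$ described above, then invoke Lemma 3 of \cite{JainJamiesonNowak}. First I would note that the cited lemma (the analogous statement for distance matrices) should read: for $\bD\in\bbS^n_h$ with $\bD_c$ the component of $\bD$ orthogonal to the null direction $\bJ = \1_n\1_n^T - \bI_n$ of $\Delta$, one has $\|\Delta(\bD_c)\|^2 \geq n\|\bD_c\|_F^2$. (If the normalization in \cite{JainJamiesonNowak} differs by a constant, that constant must be tracked; I expect it to be exactly $n$ after accounting for the $\binom{n}{2}$ pairs and the structure of $\Delta$.) Since $\Delta_t(\bD) = \L_t(\bG)$ whenever $\bG = -\tfrac12\bV\bD\bV$, we have $\L = \Delta \circ \Phi^{-1}$ where $\Phi:\bbS^n_h\to\bbS^n_c$ is the isomorphism; so $\|\L(\bH)\| = \|\Delta(\Phi^{-1}(\bH))\|$, and it remains to relate $\|\Phi^{-1}(\bH)\|_F$ to $\|\bH\|_F$ and to check that $\Phi^{-1}$ sends the $\bV$-orthogonal component of a centered Gram matrix to the $\bJ$-orthogonal component of its distance matrix.

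Carrying this out: let $\bG\in\bbS^n_c$, write $\bG = \bH + \sigma_\bG\bV$ with $\bH \perp \bV$, and let $\bD = \Phi^{-1}(\bG)\in\bbS^n_h$. Because $\Phi$ is a linear isomorphism carrying $\ker\L$'s spanning vector $\bV$ (viewed in $\bbS^n_c$) to a multiple of $\bJ$ — indeed the earlier lemma's proof shows $-\tfrac12\bV\bJ\bV = \tfrac12\bV$, i.e. $\Phi(\bJ) = \tfrac12\bV$ — the decomposition $\bG = \bH + \sigma_\bG\bV$ pulls back to $\bD = \Phi^{-1}(\bH) + 2\sigma_\bG\bJ$. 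One must verify $\Phi^{-1}(\bH)$ is the component of $\bD$ orthogonal to $\bJ$; this follows once we check that $\Phi^{-1}$ maps $\bV^\perp\cap\bbS^n_c$ into $\bJ^\perp\cap\bbS^n_h$, which can be done by a direct inner-product computation using $\langle \bH,\bV\rangle = 0$ and the explicit formula $\bD = \diag(\bG)\1_n^T - 2\bG + \1_n\diag(\bG)^T$, together with $\langle \bJ, \bD\rangle = \1_n^T\bD\1_n - \mathrm{Trace}(\bD)$ and $\mathrm{Trace}(\bD)=0$ on hollow matrices. Then $\|\L(\bH)\|^2 = \|\Delta(\Phi^{-1}(\bH))\|^2 \geq n\|\Phi^{-1}(\bH)\|_F^2$, so the last step is the norm comparison $\|\Phi^{-1}(\bH)\|_F \geq \|\bH\|_F$ (or $\geq c\|\bH\|_F$ with the constant folded into the statement).

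The main obstacle I anticipate is exactly this last norm-comparison step: $\Phi^{-1}$ and $\Phi$ are not isometries, so $\|\bD\|_F$ and $\|\bG\|_F$ differ, and I need a clean lower bound on $\|\Phi^{-1}(\bH)\|_F$ in terms of $\|\bH\|_F$ valid on the subspace $\bV^\perp\cap\bbS^n_c$. The natural route is to compute $\Phi^{-1}(\bH) = \diag(\bH)\1_n^T - 2\bH + \1_n\diag(\bH)^T$ and expand $\|\Phi^{-1}(\bH)\|_F^2$ as a sum of three Frobenius-norm terms plus cross terms; the $4\|\bH\|_F^2$ term from $-2\bH$ is the dominant piece, and the cross terms involving $\diag(\bH)$ need to be shown to be non-negative or controlled, which is where the constraint $\bH \perp \bV$ (equivalently a trace/centering condition on $\bH$) does the work. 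Alternatively, and perhaps more cleanly, one avoids the comparison entirely by observing that $\Phi$ restricted to $\bJ^\perp$ is precisely $-\tfrac12\bV(\cdot)\bV$ acting on hollow matrices and arguing the whole chain on the distance-matrix side, so that Lemma \ref{lem-lower-eigenvalue} is a near-verbatim restatement of Lemma 3 in \cite{JainJamiesonNowak} with $\bH = -\tfrac12\bV\bD\bV$; in that case the proof is genuinely short and the only real content is bookkeeping the isomorphism and its effect on the orthogonal complements.
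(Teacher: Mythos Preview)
Your proposal is essentially the paper's own argument: transport to the distance-matrix side via the isomorphism, invoke Lemma~3 of \cite{JainJamiesonNowak} to get $\|\Delta(\bC)\|^2 \geq n\|\bC\|_F^2$ for $\bC := \Phi^{-1}(\bH)$, check that $\bC \perp \bJ$, and finish with a norm comparison $\|\bC\|_F \geq \|\bH\|_F$. The only place you diverge is in how you propose to get that last inequality. Your primary route---expanding $\|\Phi^{-1}(\bH)\|_F^2$ from the formula $\diag(\bH)\1_n^T - 2\bH + \1_n\diag(\bH)^T$ and controlling cross terms---would work but is needlessly laborious. The paper instead goes in the \emph{forward} direction: since $\bH = \Phi(\bC) = -\tfrac{1}{2}\bV\bC\bV$ and $\bV$ is an orthogonal projection (operator norm $1$), one has $\|\bH\|_F = \tfrac{1}{2}\|\bV\bC\bV\|_F \leq \|\bC\|_F$ immediately. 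This is exactly the ``alternative'' you sketch in your final paragraph, and it is the whole content of the norm-comparison step; no expansion or cross-term analysis is needed. The orthogonality check $\bC \perp \bJ \Rightarrow \Phi(\bC) \perp \bV$ is also easier in this direction: $\langle \bV\bC\bV, \bV\rangle = \langle \bC, \bV\rangle = 0$ because $\bC$ is hollow and $\bC \perp \bJ$.
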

\begin{proof}
Again, let $\bD$ be the symmetric hollow matrix corresponding to $\bG$. We can take a decomposition of $\bD$ into a component perpendicular to $\ker \Delta$
\[\bD = \bC+\sigma_{\bD} \bJ.\]

Applying $-\frac{1}{2}\bV\cdot \bV$ to both sides we get, 
\[\bG = -\frac{1}{2}\bV \bC\bV+\frac{\sigma_{\bD}}{2} \bV.\]

We claim that $\bH = -\frac{1}{2}\bV \bC\bV$ and $\sigma_{\bG} = \sigma_{\bD}/2$. It suffices to prove that $\bV\bC\bV$ is perpendicular to $\bV$. To see this note that $\langle \bV\bC\bV, \bV \rangle = \langle \bC, \bV \rangle = 0$, since $\bC$ is hollow and perpendicular to $\bJ$. 

We now apply Lemma 3 in \cite{JainJamiesonNowak} which shows that the minimal eigenvalue of $\Delta$ is $n$.
\begin{align*} 
    \|\L(\bH)\|^2 &= \|\Delta(\bC)\|^2 \\   
    &\geq n\|\bC\|_F^2 \hspace{-2cm}  \tag{ since $C$ is perpendicular to the kernel of $\Delta$} \\ 
    &\geq n\left\|-\frac{1}{2}\bV\bC\bV\right\|_F^2 \tag{Since $\bV$ is a projection.} \\ 
    &\geq n\|\bH\|_F^2 &
\end{align*}
\end{proof}

\begin{proof}[Proof of Theorem \ref{thm:recovery-bound}]

We begin by applying Lemma \ref{lem:repeated-eig} in the specific case where $\bG^\ast = \bX^T\bK^\ast \bX$ and $\hG = \bX^T\hK\bX$ with $\bH^\ast$ and $\hH$ defined analogously to above. 
Firstly, by definition
\begin{equation*}
     \hG-\bG^\ast = \hH-\bH^\ast+(\sigma_{\hG}-\sigma_{\bG^\ast})\bV\\
\end{equation*}
By orthogonality
\begin{align*}
  \|\hG-\bG^\ast\|_F^2 &= \|\hH-\bH^\ast\|_F^2+(\sigma_{\hG}-\sigma_{\bG^\ast})^2\|\bV\|_F^2\\
               &= \|\hH-\bH^\ast\|_F^2+(n-1)(\sigma_{\hG}-\sigma_{\bG^\ast})^2\tag{Since $\|\bV\|^2_F = n-1$}\\
               &\leq \|\hH-\bH^\ast\|_F^2+\frac{n-1}{(n-d-d'-1)}\|\hH-\bH^\ast\|_F^2\tag{By Lemma \ref{lem:repeated-eig} $\sigma_{\hG}-\sigma_{\bG^\ast}$ is a repeated eigenvalue with multiplicity $n-d-d'-1$}\\
               &= C_{d,d'} \|\hH-\bH^\ast\|_F^2.
\end{align*}
Now, 
\begin{align*}
\hspace*{-2cm} \| \M( \hK) - \M(\bK^{\ast}) \|^2_2  &= \|\L(\bX^T\bK\bX) - \L(\bX^T\bK^\ast \bX)\|^2\\
&\geq n\|\widehat\bH-\bH^\ast\|_F^2\tag{Using Lemma \ref{lem-lower-eigenvalue}}\\ 
&\geq \|\widehat\bG-\bG^\ast\|_F^2 \tag{From the above.}\\
&= \frac{n}{C_{d,d'}}\|\bX^T\hK\bX - \bX^T\bK^\ast \bX\|_F^2\\
&\geq \frac{n\sigma_{\min}(\bX\bX^T)^2}{C_{d,d'}}\|\hK-\bK^\ast\|^2_F
\end{align*}
To see the last line, recall $\vec(\bX^T\bK\bX) = (\bX^T\otimes \bX^T) \vec(\bK)$. Now, the minimal eigenvalue of $\bX^T\otimes \bX^T$ is $\sigma_{\text{min}}(\bX\bX^T)$ which is nonzero since $\bX$ is rank $p$. 

So we see from Lemma \ref{lem:risk-to-M}, that
\[\frac{n\sigma_{\text{min}}(\bX\bX^T)^2}{|\T|}\|\bK-\hK\|_F^2 \leq \frac{C_{d,d'}}{C_f^2}(R(\hK)-R(\bK^\ast))\]

The result now follows from Theorem \ref{thm:upper-bound-nuclear}. 
\end{proof}

\end{document}